\newcommand{\Cfree}{\ensuremath{\mathcal C_{\text{free}}}}
\newcommand{\Cobs}{\ensuremath{\mathcal C_{\text{obs}}}}
\newcommand{\Cspace}{\ensuremath{\mathcal C\text{-space}}}
\newcommand{\Ccont}{\ensuremath{\mathcal C_\text{cont}}}
\newcommand{\ACcont}{\ensuremath{\widetilde{\mathcal C}_\text{cont}}}
\newcommand{\closure}{\operatorname{cl}}
\newcommand{\interior}{\operatorname{int}}
\newcommand{\q}{\ensuremath{\mathbf q}}
\newcommand{\qc}{\ensuremath{\mathbf q_c}}
\newcommand{\qa}{\ensuremath{\mathbf q_0}}
\newcommand{\dist}{\operatorname{dist}}
\newcommand{\argmin}{\operatorname{argmin}}
\newcommand{\PDt}{\ensuremath{PD_t}}
\newcommand{\PDg}{\ensuremath{PD_g}}
\newtheorem{theorem}{Theorem}
\title{\LARGE \bf
Efficient Penetration Depth Computation between Rigid Models using Contact Space Propagation Sampling
}
\author{Liang He$^{1}$ and Jia Pan$^{2}$ and Danwei Li$^{1}$ and Dinesh Manocha$^{1}$
\thanks{*This research is supported in part by ARO Contract W911NF-14-1-0437 and NSF award 1305286.}
\thanks{$^{1}$Liang He, Danwei Li and Dinesh Manocha are with the Department of Computer Science, the University of North Carolina at Chapel Hill 
        }%
\thanks{$^{2}$Jia Pan is with the Department of Mechanical Engineering and Biomedical Engineering, the City University of Hong Kong. 
        }%
}
\begin{document}

\maketitle
\thispagestyle{empty}
\pagestyle{empty}

\begin{abstract}
We present a novel method to compute the approximate global penetration depth (PD) between two non-convex geometric models. Our approach consists of two phases: offline precomputation and run-time queries.
In the first phase, our formulation uses a novel sampling algorithm to precompute an approximation of the high-dimensional contact space between the pair of models.
As compared with prior random sampling algorithms for contact space approximation, our propagation sampling considerably speeds up the precomputation and yields a high quality approximation. At run-time, we perform a nearest-neighbor query and local projection to efficiently compute the translational or generalized PD. We demonstrate the performance of our approach on complex 3D benchmarks with tens or hundreds of thousands of triangles, and we observe significant improvement over previous methods in terms of accuracy, with a modest improvement in the run-time performance. 
\end{abstract}

\section{Introduction}
\label{sec:intro}
Accurate and efficient computation of inter-penetration is important in many areas, including computer graphics, haptics, and robotics.
A common metric that is used to measure the extent of inter-penetration between two intersecting objects is \emph{penetration depth} (PD), which is defined as the minimum amount of movement or transformation required to separate two in-collision objects. The resulting motion may correspond to translational alone (translational PD) or to both translational and rotational motion PD (generalized PD).
PD computation is frequently used for many applications, such as physically-based simulation~\cite{Baraff:1998:LSC}, sample-based motion planning~\cite{Zhang:2007:GPD}, haptics~\cite{Wang:CBO:2012,Je:2012:PRP}, and contact manipulation~\cite{Koval}.

Computing the exact PD in 3D is a challenging task because of the $\mathcal O(m^3n^3)$ time complexity involved in translational PD and the $\mathcal O(m^6n^6)$ worst case time complexity for generalized PD, where $m$ and $n$ are the number of triangles in two non-convex input models~\cite{Zhang:2007:GPD}. Given the high combinatorial complexity of exact PD computation, many approximate algorithms have been proposed.
Some of the simplest algorithms compute the intersecting features of these two models and use them to compute local PD that is based on a measure of separating those overlapping features. In fact, current game engines such Box2D~\cite{Erin:2012:Box2D} and Bullet~\cite{Erwin:2012:Bullet} use local PD computations for collision response. However, the accuracy of local PD algorithms depends on relative configuration of two objects~\cite{Heidelberger04,Redon:2006:AFM}. Other techniques are based on computing an approximation of the configuration space boundary~\cite{Pan:2013:EPD,Kim:2002:FPD,Lien:2009:ASM}, but the accuracy of these techniques can vary for different configurations of two objects and it is hard to derive tight bounds. There are no good reliable algorithms for global PD computation between arbitrary non-convex 3D shapes.

\noindent {\bf Main Results}: In this paper, we present a novel algorithm to approximate global PDs between rigid objects based on efficient sampling in the contact space. Our approach can compute both translational and generalized PD with high accuracy for non-convex models. We first precompute an approximation of the contact space of two overlapping objects by generating samples in the contact space. We generate our initial samples using random sampling and use a novel propagation algorithm to generate additional samples via local search. The use of propagation sampling considerably speeds up  precomputation  and results in a high quality contact space approximation. At run-time, our algorithm performs a nearest-neighbor query to compute the PD. We also analyze the  properties of our sampling scheme and highlight its benefits. Compared with prior PD algorithms, our approach offers the following benefits:
\begin{itemize}
\item The overall algorithm is general and directly applicable to complex non-convex and non-manifold models.It can compute translational and generalized PDs. 
\item The use of propagation sampling can considerably accelerate the precomputation and provides a high quality approximation of the contact space.
\item The run-time query is very fast (a few milliseconds) and can be used for interactive applications.
\item The overall algorithm is more accurate as compared to prior local and global PD computation algorithms. 
\end{itemize}
We highlight the performance of our algorithm on different models, which contain tens or hundreds of thousands of triangles with sharp features. We also highlight the considerable improvements in the accuracy of the run-time query compared with recent algorithms based on active learning~\cite{Pan:2013:EPD} and local optimization~\cite{Tang:2014:IGP,Je:2012:PRP}. In particular, our approach can considerably reduce the error in PD computations over these methods. This paper is an extension of our work~\cite{He:2016:RAL} and provides additional technical details that were not included in~\cite{He:2016:RAL} due to the space limitations.

The remainder of this paper is organized as follows. In Section~\ref{sec:related}, we survey the literature related to the configuration space and PD computation. We introduce our notation and give an overview of the algorithm in Section~\ref{sec:overview}. We present our contact space sampling algorithm in Section~\ref{sec:method}, and we discuss and analyze its performance on many complex benchmarks in Section~\ref{sec:result} and Section~\ref{sec:analysis}.

\section{Related Work}
\label{sec:related}

\subsection{Configuration Space Computation}
There is extensive work on configuration space computations in robotics, geometric computing, and related areas. In the most general cases, configuration space computations can be reduced to computing the arrangement of contact surfaces~\cite{Varadhan:2006:TPA}. However, these approaches are susceptible to robustness issues. Moreover, the worst-case complexity of the arrangement computation can be as high as $\mathcal O(n^k)$, where $n$ is the number of contact surfaces in the arrangement and $k$ is the dimension of the configuration space~\cite{Goodman:Rourke:1997}. Some techniques for approximating the configuration space in lower dimensions are based on generating a discrete number of slices~\cite{Sacks:SCS:1997}. 
When the object movement is limited to translational motion, the resulting configuration space corresponds to the Minkowski sum of two objects~\cite{Leonidas:CCRS:1987,LPT:SpatialPlanning:1983}.

A substantial amount of work in motion planning involves approximating the configuration space with sampling techniques. These include various randomized algorithms that compute roadmaps for collision-free path planning. Some of these approaches, such as~\cite{Ji:2000:ORS, Liang:2014:ICRA, Amato:1998:OOP,Salzman:2013:MPM}, also consider the problem of sampling in a constrained configuration space or a manifold, which is similar to the contact space sampling discussed in this paper. However, these methods are designed for collision-free motion planning, and hence only require the generated samples to capture the connectivity of the free part of the constrained configuration space.

\subsection{PD Computation}

Given two convex polytopes, we can compute the exact translational PD using Minkowski sum computation~\cite{Gino:2001:GDC,Agarwal:2000:CPD,Kim:2002:DEEP}. For non-convex polyhedral models, the PD can be computed using a combination of convex decomposition, pairwise Minkowski sums, and exact union computation~\cite{Kim:2002:FPD}. Different techniques have been proposed to approximate the boundary of the Minkowski sum~\cite{Kim:2002:FPD,Varadhan:2006:TPA}, but they are limited to offline and non-interactive applications. 
Most practical algorithms for translational PD are based on local computations. These local algorithms only consider the intersecting or overlapping of features such as the vertices, edges, and faces. Based on pairwise intersections, they tend to estimate a motion that would separate these intersecting features~\cite{Guendelman:2003:NRB,Redon:2006:AFM,Lien:2009:ASM,Tang:2009:IHD,Tang:2012:CPF,weller2009inner}. 
Other techniques estimate the local intersection volume and its derivative to perform volume-based repulsion~\cite{Wang12}. 
Local translational PD computation can also be estimated using distance fields~\cite{Heidelberger04}. Point-based Minkowski sum approximation~\cite{Lien:2008:CMS} has been used approximate the translational PD. 
The exact computation of generalized PD can be formulated in terms of computing the arrangement of contact surfaces ~\cite{Zhang:2007:GPD}. However, no practical algorithms are known for exact computation due to its high combinatorial complexity. Most practical algorithms are based on local optimization techniques ~\cite{Nawratil:2009:GPD,Zhang:2007:AFP,Je:2012:PRP,Tang:2014:IGP}.
However, due to the high time and storage complexity, most generalized PD algorithms are based on local optimization-based
techniques~\cite{Nawratil:2009:GPD,Zhang:2007:AFP,Je:2012:PRP,Tang:2014:IGP}. 
~\cite{Pan:2013:EPD} recently proposed a learning-based approximate penetration depth computation algorithm that reduces the contact space problem to robust classification by finding a separating surface between in-collision and collision-free samples in the configuration space. However, this algorithm cannot provide high quality approximation of the contact space  of objects with sharp features, because it represents the contact space using the SVMs (support vector machines). Recently, Kim et al.~\cite{Kim:2015:IROS} present a hybrid PD computation algorithm that combines this active learning approach with local optimization based methods to improve its accuracy.

\section{Background and Overview}
\label{sec:overview}
In this section, we introduce our notation and give an overview.

\subsection{Contact Space}
We denote the configuration space for a pair of triangular meshes $A$ and $B$ as $\Cspace$. Each configuration or point in the configuration space represents the relative transform (i.e., position and orientation) of $A$ with respect to $B$. In the rest of this paper, we assume that $A$ is movable and $B$ is fixed. The configuration space is composed of two parts: collision-free space represented as $\Cfree = \closure(\{\q: A(\q) \cap B = \emptyset\})$, and in-collision or obstacle space represented as $\Cobs = \interior(\{\q: A(\q) \cap B \neq \emptyset\})$, where $A(\q)$ corresponds to $A$ located at the configuration $\q$, and $\closure(\cdot)$ and $\interior(\cdot)$ correspond to set closure and interior operations, respectively.

The boundary of $\Cfree$ is called the \emph{contact space} and is denoted as $\Ccont =\partial \Cfree$. The contact space corresponds to the configurations where $A$ and $B$ just touch each other without any penetration. Moreover, a contact configuration is classified as a collision-free configuration in our formulation.

\subsection{PD Formulation}
\label{sec:overview:pdformulation}

The global penetration depth corresponds to the minimum motion or transformation required to separate two intersecting objects $A$ and $B$~\cite{Agarwal:2000:CPD,Kim:2002:DEEP}:
\begin{align}
\label{eq:2:PDgdef} \text{PD}(A(\qa), B) = \min_{\q \in
\Ccont} \dist(\qa, \q),
\end{align}
where $\qa$ corresponds to an in-collision configuration and $\q$ is a configuration that belongs to the contact space $\Ccont$.
We use the notation $\dist(\cdot, \cdot)$ to represent a distance metric between two configurations. This includes the Euclidean metric for translational PD, and many different formulations can be used for generalized PD computation. We denote $\qc$ as the contact configuration where PD$(A, B)$ achieves its
minimal value: $\qc = \argmin_{\q \in \Ccont} \dist(\qa, \q)$.

Different formulations of PD can be defined by appropriate $\dist(\cdot, \cdot)$ metrics. The metric for the translational motion ($\PDt$) is simple and is the standard Euclidean distance metric between vectors corresponding to the configurations. The metric for the general motion ($\PDg$) can be defined using different formulations, including the weighted Euclidean distance~\cite{Wang:CBO:2012}, object norm~\cite{Kazerounian:ASME:1992,Je:2012:PRP,Tang:2014:IGP}, and a displacement distance metric~\cite{Zhang:2007:AFP}. In this paper, we use the object norm~\cite{Kazerounian:ASME:1992,Je:2012:PRP,Tang:2014:IGP} as the $\PDg$ metric, which can be intuitively defined as an average squared length of all displacement vectors between two objects.

\begin{figure*}[!ht]
\centering
\includegraphics[width=0.95\linewidth]{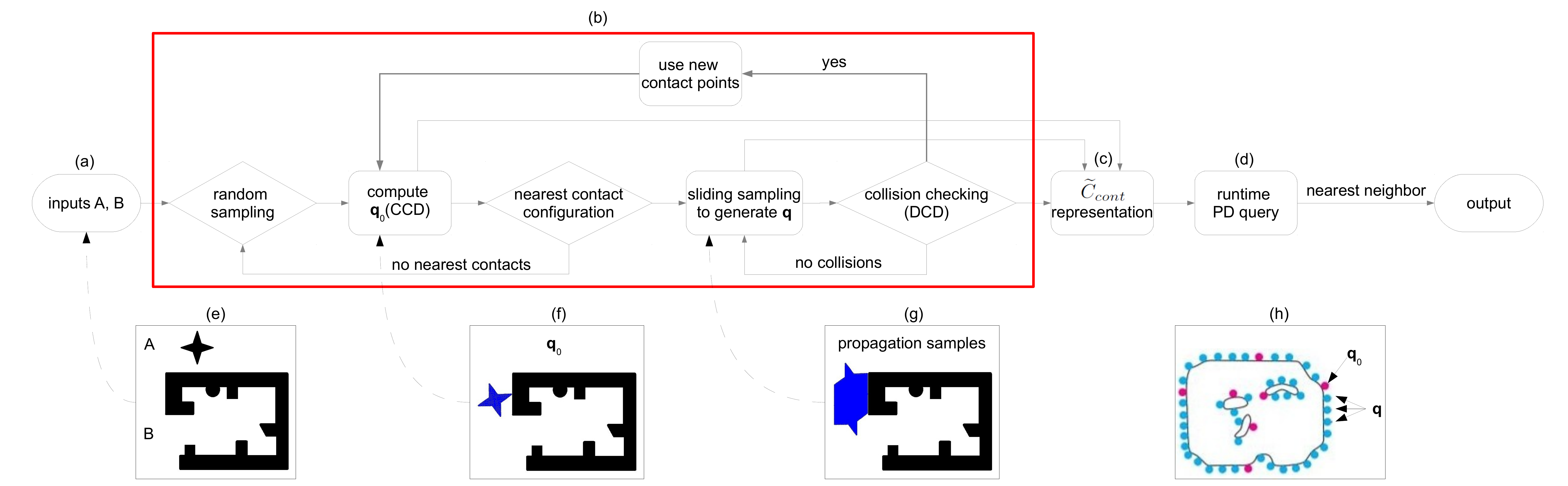}
\caption{The offline computation pipeline and the run-time phase of our algorithm. Given two input objects in (a), the precomputation algorithm in (b) performs the propagation sampling to efficiently generate an approximation to the contact space ($\ACcont$) as the output (c). This approximate contact space is then used for efficient run-time PD query as shown in (d). (e) shows the shapes of the input objects. (f) and (g) show the relative transform between two input objects under the configuration $\q_0$ randomly generated as the propagation seed and a set of configurations generated from $\q_0$ using local-search based propagation, respectively. (h) is the output $\ACcont$ where red points are the random seed samples and the blue points are the propagate-samples.}
\label{fig:pipeline}
\end{figure*}

\subsection{Approximate $\Ccont$ Computation}

In order to construct an approximation of the contact space, we perform an offline sampling in the configuration space, as shown in Figure~\ref{fig:pipeline}. Given two input objects $A$ and $B$,  our method starts with a \emph{random-sample} in the contact space. This random-sample can be generated using traditional continuous collision checking (CCD) techniques. These techniques compute the first time of collision or contact between two objects by reducing the problem to finding roots of polynomial equations corresponding to the triangle features. 
Continuous collision detection has been widely used for physically-based simulation~\cite{Tang:2012:CPF} as well as local planning in robotics. 
Next, our method performs an iterative local search around the initial random-sample by sliding object $A$ over object $B$'s surface, and generates more samples on the contact space. We denote the samples generated on the contact space during the local search as \emph{propagate-samples} to distinguish them from random-samples (see Figure~\ref{fig:pipeline}(h)). The local search stops when no more propagate-samples can be generated, and we then restart the iteration with a new random-sample in the contact space. This iterative process continues until a sufficient number of samples have been generated. The random-samples and propagate-samples computed by our approach make up an approximate sample-based representation of the contact space $\Ccont$ between $A$ and $B$, and we denote this approximation as $\ACcont$.

\subsection{Approximate PD Computation}
Given the approximate representation of the contact space, $\ACcont$, we compute the approximate global PD by performing a nearest-neighbor query in $\Ccont$. The definition of approximate penetration depth is analogous to the exact penetration depth in Equation~\ref{eq:2:PDgdef}:
\begin{align}
\label{eq:PDgdef} \overline{\text{PD}}(A(\qa), B) = \min_{\q \in \ACcont}\dist(\qa, \q),
\end{align}
where the domain for $\q$ is restricted to $\ACcont$.
The accuracy of $\overline{\text{PD}}$ is governed by the accuracy of  $\ACcont$ with respect to $\Ccont$.
Given a query configuration $\qa$, we perform a nearest-neighbor search to find the configuration $\qc$ that is closest to the decision boundary $\ACcont$. Finally, the distance between $\qa$ and $\qc$ is computed using an appropriate distance metric $\dist(\cdot, \cdot)$ and the result is an approximation of the exact PD value. As mentioned in Section~\ref{sec:overview:pdformulation}, we use the object norm as the distance metric.

\section{Contact Space Propagation Sampling}
\label{sec:method}
In this section, we present our contact space propagation sampling algorithm that computes an offline approximation of the contact space.
Our sampling algorithm is an iterative algorithm. During each iteration, we start from a random-sample in the contact space, and then perform a local search around this initial sample to generate more propagate-samples on the contact space. Once the local search stops, we repeat the iterative step with a new random-sample. This iterative process continues until a sufficient number of samples have been generated. The random-sample on the contact space during each iteration is computed by first generating two samples in the configuration space, one in collision-free space and the other in the in-collision space. We join those samples by a straight line in the configuration space and find its intersection with the contact space. This reduces to computing the first time of contact between a collision-free and an in-collision configuration, which corresponds to a CCD query. The resulting sample on the contact space is the random-sample used during this iteration.

Ideally, the local search procedure should run many steps and generate sufficient numbers of propagate-samples, in order to cover a large portion of the contact space. This coverage is important for the efficiency of the sampling algorithm, because the generation of a random-sample requires the expensive CCD query. This query is more expensive than the generation of a propagate-sample that only needs to perform the DCD (discrete collision detection) query. If the local search can generate a high number of propagate-samples, it amortizes the computational cost of generating a random-sample over a high number of propagate-samples, and improves the efficiency of our precomputation step.
Our main goal is to design a fast and effective local search algorithm that can compute the propagate-samples quickly. To that end, we perform a breadth-first propagation on the contact space, starting from the random-sample. The breadth-first propagation maintains a queue (we call it the \emph{propagate-queue}) of contact samples. During each step of this propagation, we pop one contact sample $\q$ from this queue, and then slide object $A(\q)$ over the surface of object $B$ in different directions along its boundary. This propagation step results in a set of new samples $\{\q'\}$ around $\q$, as shown in Figure~\ref{fig:pipeline}. Next, we perform collision checking for these samples $\{\q'\}$, and only add the collision-free samples into the propagate-queue. The breadth-first search is repeated until the queue is empty.

A new sample $\q'$ is propagated from a contact sample $\q$ and is added into the propagate-queue only if it is collision-free. Otherwise, the sample would be discarded and the actual execution of the local search's propagation may be interrupted, as shown in Figure~\ref{fig:propagate}. To overcome this challenge, we classify the local search process into two cases: the {\em boundary configuration case} when $\q' \in \Ccont$, and the {\em internal configuration case} when $\q' \in \Cobs$. In the internal configuration case, we resume the local search computation according to objects' contact features (i.e., local vertices, edges, faces). This formulation greatly improves the efficiency of the local search. The overall local search computation algorithm is shown in Algorithm~\ref{algo:localsearch}.

\subsection{Boundary Configuration Case}
In the boundary configuration case, each step of the local search is a standard propagation step from a contact sample $\q$. We first compute the contact pair $(p_A, p_B)$ between $A(\q)$ and $B$, where $p_A$ and $p_B$ are two contact points on objects $A$ and $B$. The points $p_A$ and $p_B$ belong to two different objects but overlap with each other, just touching at the configuration $\q$.  We also compute the angle $\theta$ between the normal vectors at $p_A$ and $p_B$. Next, we slide the object $A$ over the surface of object $B$ with a distance $d$, which can be any value less than the edge length of triangular mesh(since we generate the samples on the mesh of object).
For our case, we we use the edge lengths of the fixed objects as the sliding step, and thus the propagate-samples all land in the vertices of object $B$. During the sliding movement, the contact point on $A$ remains unchanged as $p_A$, and the contact point on $B$ moves from $p_B$ to $p_B'$ (see Figure~\ref{fig:propagate}). Now $A$ and $B$ touch at the new contact point pair $(p_A, p_B')$. We further rotate the object $A$ such that the angle between the two objects' contact normals remains to be $\theta$ (as shown in Figure~\ref{fig:propagate}). In this way, we compute a new configuration $\q'$, which can be specified using $p_A$, $p_B'$ and $\theta$ for 2D objects. Similar propagation procedure can also be defined for 3D objects, whose configuration space has 6 dimensions.

\begin{algorithm}[!h]
  \SetKwInOut{Input}{input}\SetKwInOut{Output}{output}
  \Input{Two objects $A$ and $B$, an initial random-sample $\q^r$, the search step size $d$}
  \Output{A set of  propagate-samples $S$ from $\q^r$ }
  \BlankLine
  \tcc{Initialize final result set}
  $S \leftarrow \emptyset$ \;
  \tcc{Initialize a propagate-queue $Q$}
  $p_A$, $p_B$ $\leftarrow$ contact points of $A(\q^r)$ and $B$ \;
  $\theta \leftarrow$ angles between contact normals \;
  $Q \leftarrow \{(\q^r, p_A, p_B, \theta)\}$ \;
  \While{$Q \neq \emptyset$}
        {
          $(\q, p_A, p_B, \theta)$ $\leftarrow$ \text{pop}$(Q)$ \;
          $N \leftarrow$ $B$'s vertices at a step of $d$ away from $p_B$ \;
          \For{$p_B' \in N$}
              {
                $(\q', p_A, p_B', \theta) \leftarrow \mathcal{T}(\q, p_A, p_B, \theta)$ \;
                \If{\emph{isCollision}$(\q')$}
                   {
                     \tcc{Internal configuration case}
                     Compute the critical $\q^c$ between $\q$ and $\q'$ \;
                     $M \leftarrow$ contact pairs other than $(p_A, *)$ between $A(\q^c)$ and $B$ \;
                     \For{$(p_A^c, p_B^c) \in M$}
                         {
                           $\theta^c \leftarrow$ angles between contact normals at $p_A^c$ and $p_B^c$ \;
                           \tcc{Check whether $\q^c$ is close to previous samples}
                           \If{\emph{kdTreeTest}($S$,$\q^c$,$r$) = true}
                           {
                            continue \;
                           }
                           $Q \leftarrow Q \cup \{(\q^c, p_A^c, p_B^c, \theta^c)\}$ \;
                         }
                   }
                \Else
                   {
                     \tcc{Boundary configuration case}
                      $Q \leftarrow Q \cup \{(\q', p_A, p_B', \theta)\}$ \;
                   }

              }
              $S \leftarrow S \cup \{\q\}$ \;
        }
  \caption{Local search for propagate-samples}\label{algo:localsearch}
\end{algorithm}

We represent the sliding movement from $\q$ to $\q'$ as a transition function $(\q', p_A, p_B', \theta) = \mathcal{T}(\q, p_A, p_B, \theta)$.
The new generated configuration $\q'$ is  pushed into the propagate-queue for future propagation-sample computations. This sliding procedure is executed along different directions on the surface of object $B$ around $p_B$, and results in a set of new configurations $\{\q'\}$ spreading over the neighborhood of $\q$ on the contact space. The collision-free samples in $\{\q'\}$ are located on the contact space and we add them directly into the propagate-queue. For in-collision samples in $\{\q'\}$, we treat them as the internal configuration case and stop propagation.

\begin{figure}[h]
\centering
\includegraphics[width=0.85\linewidth]{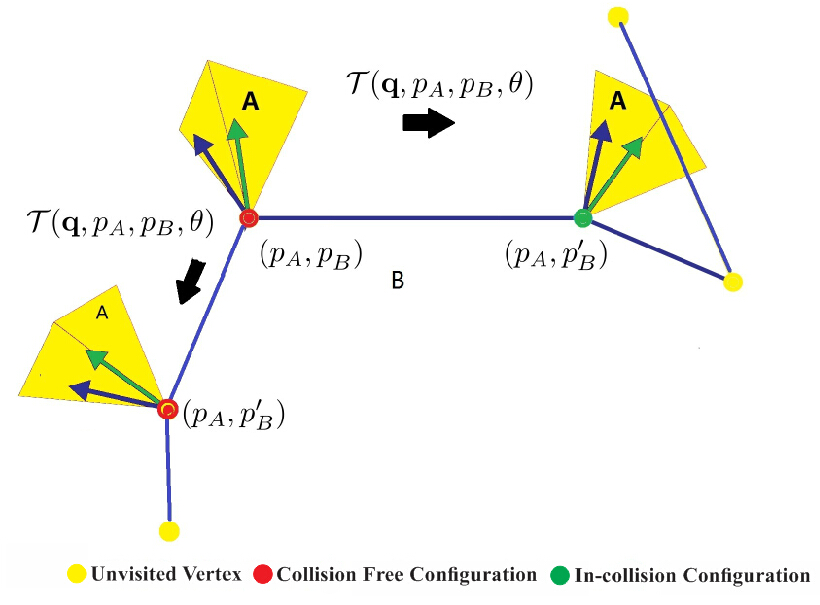}
 \vspace*{-0.1in}
\caption{
An example of the contact sample propagation between an object $A$ (the yellow tetrahedron) and an object $B$ (the blue polygon). The object $A$ is initially at configuration $\q$ where the contact points between two objects are $p_A$ and $p_B$ and the angle between contact normals is $\theta$. During the propagation, the object $A$ moves along one edge of the object $B$'s mesh and finally contacts with object $B$ at $p_B'$, a neighbor vertex of $p_B$. Object $A$ always has point $p_A$ as its contact point. }
\vspace*{-0.1in}
\label{fig:propagate}
\end{figure}

In the above description, we restrict all new propagate-samples $\{\q'\}$ to have the same angle $\theta$ between the two objects' contact normals. This simple heuristic greatly increases the probability that a new sample $\q'$ will be collision-free. It is based on the assumption that the surface curvature around the neighborhood of $p_B$ is roughly constant. Therefore, a configuration $\q'$ with the same relative angle as $\q$ should have a high probability to be collision-free after the sliding movement.

The parameter $d$ in a boundary configuration propagation step determines the step-size of the propagation. Its value varies during the local search process. In particular, $d$ is inversely proportional to the relative scale of $A$ with respect to $B$, and it is also related to the surface curvature at $p_B$.

\subsection{Internal Configuration Case}
\label{sec:method:interrupt}
In this case, $\q$  is inside $\Cobs$, i.e. inside the C-obstacle space.
 This can happen when two objects are very close in size or the surface of $B$ is `bumpy', i.e., the curvature changes dramatically over the surface. In this case, the propagation search step usually explores only a few steps because $A$ and $B$ will collide even when $A$ only slides a small step over $B$'s surface. In an extreme situation, every local search returns no propagation-samples and all samples generated on the contact surface are random-samples.
This will result in very slow sampling procedure, and these samples cannot be evenly distributed over the contact space.

Our solution for the internal configuration case exploits the contact features, and is based on the following property of the sliding movement:
\begin{theorem}
\label{thm:slide}
{\em Suppose one step of slide moves a contact sample $\q^0$ into an in-collision sample $\q^1$. The transition function is
\begin{equation}
\label{eq:transit}
(\q^1, p_A, p_B^1, \theta) = \mathcal{T}(\q^0, p_A, p_B^0, \theta),
\end{equation}
where $(p_A, p_B^0)$ is the contact pair at $\q^0$, $(p_A, p_B^1)$ is the contact pair at $\q^1$, and $\theta$ is the angle between contact boundary configurations. On the resulting sliding trajectory, there exists one configuration $\q^t$ such that $A(\q^t)$ and $B$ are in contact, but have at least one additional contact point other than $p_A$. Here we assume the sliding movement is parameterized by $t \in [0,1]$. We call the configuration $\q^t$ the critical configuration.}
\end{theorem}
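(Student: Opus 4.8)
The plan is to treat the slide as a continuous one-parameter path $\q^t$, $t\in[0,1]$, with $\q^t|_{t=0}=\q^0$ and $\q^t|_{t=1}=\q^1$, and to locate the critical configuration as the \emph{first} instant along this path at which penetration appears. By construction of the transition function $\mathcal{T}$, the path maintains the prescribed contact pair $(p_A,p_B(t))$ with fixed normal angle $\theta$ for every $t$, so $A(\q^t)$ always touches $B$ at this pair; in particular $\q^t$ never lies in the interior of $\Cfree$, and at each $t$ it is either a pure contact configuration in $\Ccont$ or a penetrating configuration in $\Cobs$. The hypothesis gives $\q^0\in\Ccont$ (the sole contact, no penetration), while $\q^1\in\Cobs$ (the prescribed contact at $(p_A,p_B^1)$ together with extra overlap developed elsewhere).

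First I would define the first penetration time
\[
t^\ast=\inf\{t\in[0,1]:\q^t\in\Cobs\}.
\]
This set is nonempty since $\q^1\in\Cobs$. Because $\Cobs=\interior(\{\q:A(\q)\cap B\neq\emptyset\})$ is open and $t\mapsto\q^t$ is continuous, $\{t:\q^t\in\Cobs\}$ is relatively open in $[0,1]$, so its complement---the set of non-penetrating parameters---is closed and contains every $t<t^\ast$. Hence $\q^{t^\ast}$ is itself non-penetrating, i.e. $\q^{t^\ast}\in\Cfree$; since the contact at $p_A$ is maintained, $\q^{t^\ast}\in\partial\Cfree=\Ccont$. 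This is a standard intermediate-value / boundary-crossing argument and establishes that $\q^{t^\ast}$ is a genuine contact configuration, which is the first half of the claim. I would set the critical configuration to be $\q^t:=\q^{t^\ast}$.

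The substantive step is to show that $\q^{t^\ast}$ carries a contact point in addition to $p_A$. I would argue by contradiction: suppose $(p_A,p_B(t^\ast))$ is the only touching pair, so that every other feature of $A(\q^{t^\ast})$ is separated from $B$ by a strictly positive distance $\delta>0$. Since the pairwise feature distances depend continuously on the configuration and the path is continuous, these separations remain positive on a right-neighborhood $(t^\ast,t^\ast+\varepsilon)$, so no penetration can originate away from $p_A$ there. The transition function keeps the contact at $p_A$ tangential with the fixed normal angle $\theta$, so the maintained contact is the only remaining candidate locus for the onset of overlap; but if overlap genuinely developed there for $t>t^\ast$, its boundary would furnish a new touching feature arriving at $t^\ast$, again an additional contact at $\q^{t^\ast}$. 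Either alternative contradicts the assumption that $p_A$ is the unique contact at $t^\ast$ while penetration occurs for all $t$ arbitrarily close to $t^\ast$ from the right. Hence at least one additional contact point must be present at $\q^{t^\ast}$, completing the proof.

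The main obstacle I anticipate is the local analysis near the maintained contact point $p_A$: ruling out the degenerate possibility that penetration ``appears from nowhere'' exactly at $p_A$ without producing a \emph{distinct} new contact feature (a higher-order tangency in the smooth setting). Handling this cleanly relies on the mild regularity already invoked in the boundary-configuration case---that $B$'s surface has roughly constant curvature near the sliding contact, so that maintaining the angle $\theta$ keeps the touch locally tangential---together with the fact that the actual models are triangular meshes. In the piecewise-linear setting an ``additional contact point'' is a new vertex/edge/face incidence, so the onset of collision is necessarily marked by such a new incidence rather than by an ill-defined coincidence at $p_A$ alone; I would therefore make the contradiction argument precise by working directly with these combinatorial contact features.
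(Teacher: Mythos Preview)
Your proof is correct in spirit and reaches the same conclusion, but the paper takes a more direct route that sidesteps precisely the obstacle you flag at the end. Rather than defining the critical time as the first instant of \emph{penetration}, $t^\ast=\inf\{t:\q^t\in\Cobs\}$, and then arguing by contradiction that an additional contact must be present there, the paper defines the critical time directly as the first instant at which any point of $A$ \emph{other than} $p_A$ touches $B$:
\[
c=\inf_{p\in A\setminus\{p_A\}}\ \inf\{\,s\in[0,1]:p(\q^s)\cap B\neq\emptyset\,\}.
\]
Since $\q^0$ is a pure contact (only $p_A$ touches $B$) while $\q^1$ is in-collision (so some point other than $p_A$ lies inside $B$, hence must have first touched $B$ at an earlier parameter), one gets $c\in(0,1)$ immediately. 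The additional contact point at $\q^c$ is then built into the definition rather than recovered by a separate argument.

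The advantage of the paper's formulation is that it never has to analyse what happens \emph{at} the maintained contact $p_A$: by tracking the first new touching point elsewhere on $A$, the degenerate higher-order tangency scenario you worry about simply does not arise. Your approach has a more general-topology flavour (boundary crossing into an open set), but it pays for that generality by needing the local regularity / piecewise-linear discussion near $p_A$ to close the contradiction. In effect, the paper trades your two-step ``first penetration, then locate the extra contact'' for a single step ``first extra contact,'' which is both shorter and avoids the case analysis you correctly anticipated as the delicate part.
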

\begin{proof}
For each $p \in A$, we denote $p(\q)$ as its position corresponding to the configuration $\q$. We further define $c = \inf_{p \in A - \{p_A\}} \inf_{s \in [0,1]} \{s \mid p(\q^s) \cap B \neq \emptyset\}$, i.e., the first time that $A$ contacts with $B$ on a
 point other than $p_A$. Since $\q^0$ is collision-free and $\q^1$ is in-collision, we know $c \in (0, 1)$. $\q^c$ has the property that $A(\q^c)$ contacts with $B$, but there is at least one additional contact point other than $p_A$. Therefore, the theorem is proved.
\end{proof}

Based on this theorem, we resume the propagation at an in-collision sample $\q'$ by first computing the critical configuration $\q^c$ between $\q$ and $\q'$. $\q^c$ has more than one contact point, and we denote the set of contact points other than $p_A$ as $M$. For each contact point $p_A^c$ in $M$, we continue the local search step by changing the contact point on $A$ that remains unchanged during the movement from $p_A$ to $p_A^c$. In particular, we compute the contact pair $(p_A^c, p_B^c)$ and $\theta^c$, the angle between the contact boundary configurations at $p_A^c$ and $p_B^c$. Object $A$ now starts sliding from the contact point $p_A^c$ and the corresponding transition function is $\mathcal{T}(\q^c, p_A^c, p_B^c, \theta^c)$. Figure~\ref{fig:smooth1} illustrates this process.
\begin{figure}[!ht]
\centering
\includegraphics[width=0.8\linewidth]{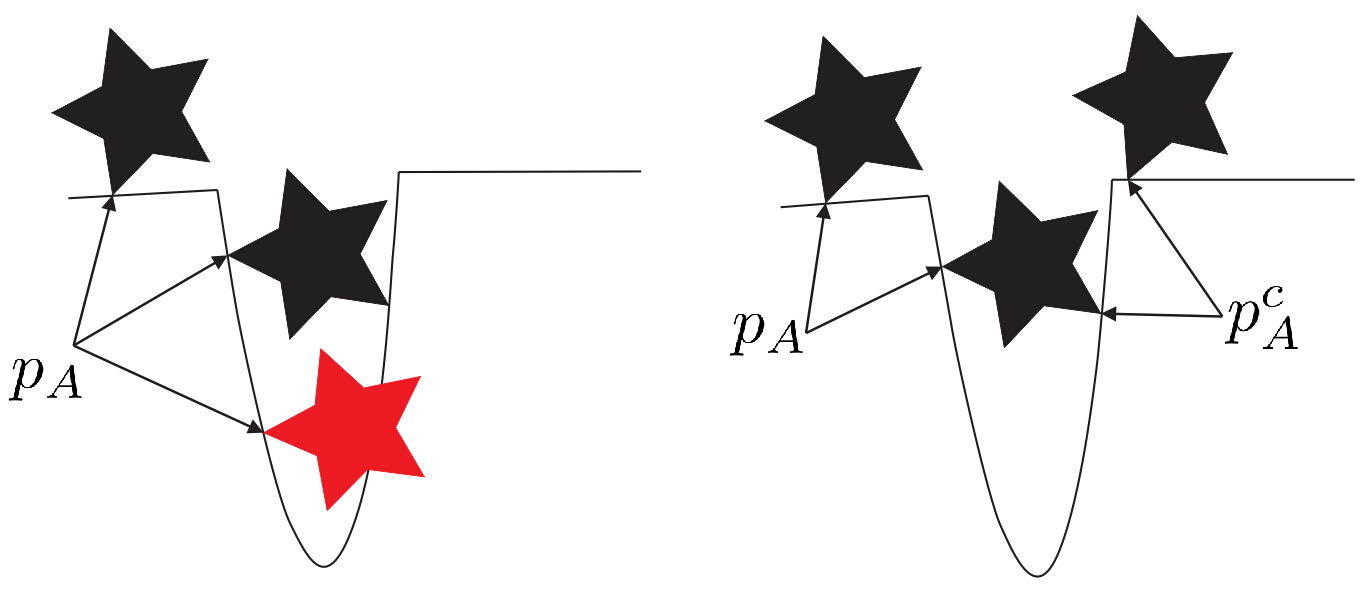}
\caption{
Objects with highly varying curvatures where the sliding movements can result in in-collision configurations and the early termination of the propagation procedure. After applying the change of contact points, the local search can escape from the `bumpy' surface region. In the left figure, the star shaped object $A$ moves over the surface $B$, and the red star denotes the sample configuration corresponding to the internal configuration case. In the right figure, we compute the critical configuration $\q^c$ and find a new contact point $p_A^c$. }
\label{fig:smooth1}
\vspace*{-0.1in}
\end{figure}

To avoid generating repeated samples during the internal configuration case, we use a kd-tree to check the new contact configuration $\q^c$ and use it as the new propagation seed only if it is not close to any existing samples, as shown in the line $15$ in Algorithm~\ref{algo:localsearch}.

\subsection{Run-time PD Queries}
In the PD query stage, we use the contact configurations generated during the precomputation. Given a query configuration $\q \in \mathcal C_{obs}$, we use nearest-neighbor query to find
two contact configurations $\q_c^0$ and $\q_c^1$ that are closest to $\q$ and incident to a same triangle on the fixed object. Next, we compute a configuration $\q_c$ which is a linear combination of these two contact configurations, and the line connecting $\q^2$ an $\q$ is perpendicular to the line connecting $\q_c^0$ and $\q_c^1$:
\begin{equation}
\q^2 = (1-\rho) \q_c^0 + \rho \q_c^1; \ \  \mbox{and} \ \  (\q^{2}-\q) \perp (\q_c^1 -\q_c^0).
\end{equation}
We then perform a linear search from $\q^2$ along directions of $\q - \q^2$ until we find a collision-free configuration $\q_c^2$. The nearest-neighbor query is performed based on the $\dist()$ metrics listed in Section~\ref{sec:overview}. Finally, we have to compare the distance of $\q^2$ with those of $\q_c^0$, $\q_c^1$, and choose the smallest one as the PD value of $\q$.

\section{Performance and Comparison}
\label{sec:result}
In this section, we highlight the performance of our propagation sampling based precomputation and the run-time PD query on a set of challenging 3D benchmarks as shown in Figure~\ref{fig:benchmarks}.
We use PQP query package to perform all the collision tests used in our framework. For PD query, we investigate both the translational and generalized PD, using the distance metric as mentioned in Section~\ref{sec:overview}.

We implement our algorithm in C++ on an Intel Core i7 CPU running at 3.30GHz with 16GB of RAM on Window 7 (64-bits) PC. All the performance and timing results are generated using a single core. Our precomputation algorithm can be easily parallelized on a multi-core PC because both the random-samples and the propagate-samples on the contact space can be generated in parallel.

\subsection{Performance}
\begin{figure}[!h]
  \centering
  \subfloat[Donut]{\includegraphics[width=0.24\linewidth]{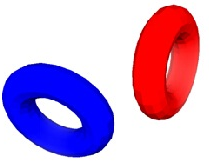}}
  \subfloat[CAD1]{\includegraphics[width=0.20\linewidth]{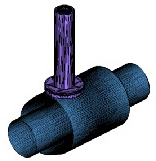}}
  \subfloat[CAD2]{\includegraphics[width=0.20\linewidth]{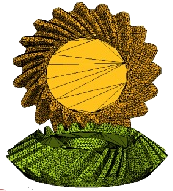}}
  \subfloat[CAD2 zoomed view]{\includegraphics[width=0.20\linewidth]{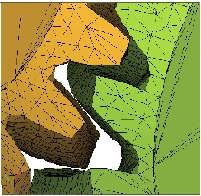}} \\
  \subfloat[Buddha]{\includegraphics[width=0.2\linewidth]{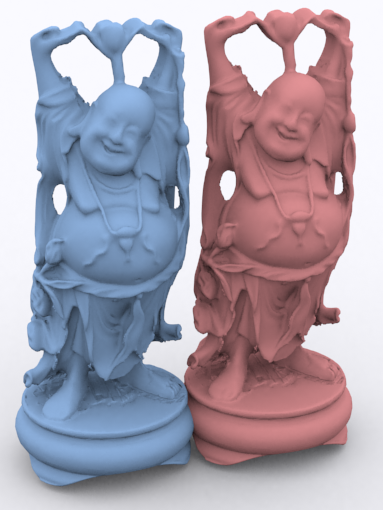}} \qquad
  \subfloat[Dragon]{\includegraphics[width=0.21\linewidth]{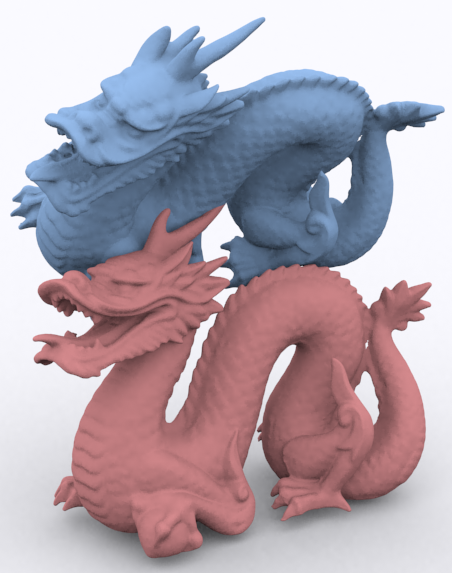}} \qquad
  \subfloat[Teeth]{\includegraphics[width=0.34\linewidth]{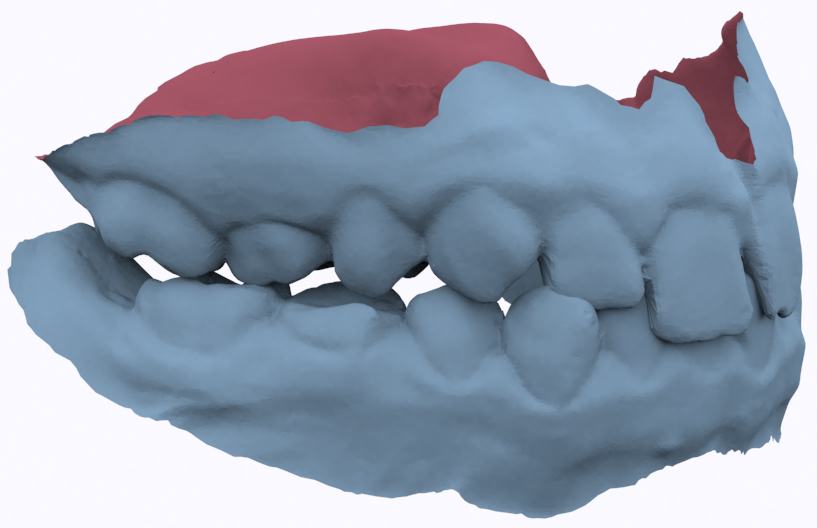}}
  \caption{The benchmarks we used to investigate the PD performance: (a) Donut, each with 576 triangles; (b) CAD1 with about 10K triangles each; (c) CAD2 with about 12K triangles each; (d) a zoomed view of CAD2 is also provided to show the complexity of this benchmark; (e) Buddha with 1M triangles each; (f) Dragon with with 230K triangles each; (g) Teeth models with about 40K triangles each.}
  \label{fig:benchmarks}
\end{figure}

\noindent {\bf Time Cost:} We now highlight the performance of both the off-line precomputation and run-time PD queries. Table~\ref{table:precomputation} and Table~\ref{table:precomputationLocal} show the time costs of the precomputation phase for general PD and translational PD respectively, given different number of propagation steps. We use at most 100,000 propagation steps, and each step will generate one random-sample and a set of propagate-samples extended from the random-sample. We can see that the time cost for a single propagation step increases while more samples are generated, and this is due to the fact that while more samples are generated, it is more difficult to find unrepeated new samples in the contact space. Table~\ref{table:precomputation} and Table~\ref{table:precomputationLocal} also provide the number of samples generated for each propagation step, which is averaged over all the 100,000 propagation steps. 
Table~\ref{table:PDQuery} provides our method's run-time cost for a single PD query, which is computed as the average time cost after computing 1,000 randomly generated in-collision queries. 

{\small
\begin{table}
\begin{tabular}{|c|cccc|c|c|}
 \hline
 \#propagations  &30,000  &50,000 & 80,000& 100,000& \#samples\tabularnewline
\hline
Donut    & {314} & {551}  & {693}  & {891} & 79 \tabularnewline
CAD1    & {421} & {569}  & {745}  & {883} & 411\tabularnewline
CAD2    & {390} & {581}  & { 737}  & {923} & 323\tabularnewline
Teeth    & {520} & {892}  & { 1,455}  & {1,621} & 1，231\tabularnewline
Dragon & {590}& {940}  & {1,401}  & {1,632} & 2，997 \tabularnewline
Buddha & {603}& {875}  & {1,337}  & {1,501} & 1，820\tabularnewline
\hline
\end{tabular}
\caption{\footnotesize{Performance of the precomputation cost for general PD on different benchmarks. We vary the number of propagation steps (i.e., the number of random-samples) from $30,000$ to $100,000$, and the corresponding time costs (in seconds) are shown in the first four columns.  We also provide the average number of samples per propagation step in the column \#samples.}}
\label{table:precomputation}
\vspace*{-0.3in}
\end{table}

}
{\small
\begin{table}
\begin{tabular}{|c|cccc|c|c|}
 \hline
 \#propagations  &30,000  &50,000 & 80,000& 100,000& \#samples\tabularnewline
\hline
Donut    & {156} & {193}  & {344}  & {423} & 63 \tabularnewline
CAD1    & {159} & {243}  & {387}  & {659} & 329\tabularnewline
CAD2    & {163} & {247}  & { 401}  & {613} & 341\tabularnewline
Teeth    & {155} & {471}  & { 912}  & {973} & 768\tabularnewline
Dragon & {333}& {752}  & {941}  & {1,429} & 667 \tabularnewline
Buddha & {346}& {771}  & {1,017}  & {1,918} & 751\tabularnewline
\hline
\end{tabular}
\caption{\footnotesize{Performance of the precomputation cost for translational PD on different benchmarks. We vary the number of propagation steps (i.e., the number of random-samples) from $30,000$ to $100,000$, and the corresponding time costs (in seconds) are shown in the first four columns.  We also provide the average number of samples per propagation step in the column \#samples.} }
\label{table:precomputationLocal}
\vspace*{-0.3in}
\end{table}

}
\noindent {\bf Convergence:} We investigate the convergence of our sampling algorithm by varying the number of samples and evaluating its benefits in terms of approximating the contact space. Since it is difficult to obtain the ground truth for the general PD, \emph{the investigation is only for the translational PD}. We use the result from the Minkowski sum based approach~\cite{Lien:2008:CMS} as the ground truth of the contact space, and then we compute the error between the PD results from the ground truth and the PD results provided by our propagation sampling approach. 

As we generate more random-samples and propagate-samples, our precomputation algorithm provides a better approximation of the contact space. To evaluate the approximation quality of the contact space, we measure the number of vertices in the original models that have been visited during the propagation as contact points (e.g., $p_A$ and $p_B$ in Figure~\ref{fig:propagate}). This is due to the fact that the denser the contact space is sampled, the more vertices should be visited as contact points during the sampling process. The measurement result is shown in Figure~\ref{fig:converge}, and we can observe that for all benchmarks, the propagation procedure convergence after generating $10$ million samples, and a high quality approximation to the contact space is achieved.
\begin{figure}[!h]
\centering
\includegraphics[width=0.95\linewidth]{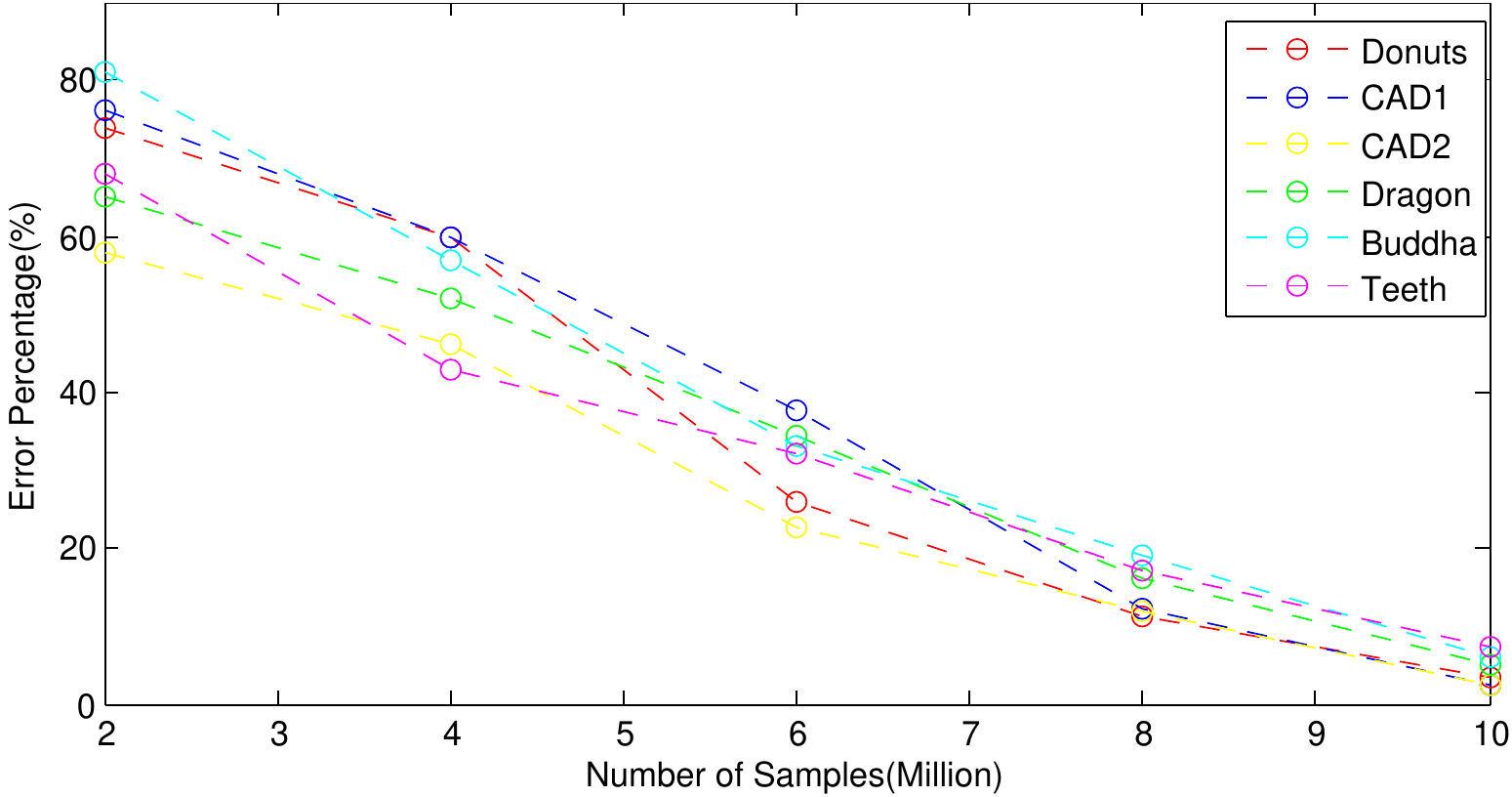}
\caption{Convergence analysis of our precomputation algorithm for general PD. The $x$-axis corresponds to the number of samples (both random and propagation). The $y$-axis corresponds to our method's error, which decreases while more samples are generated. In all cases, our algorithm converges quickly after $10$ million samples. }
\label{fig:converge}
\vspace*{-0.1in}
\end{figure}

\subsection{Comparison with Prior Methods}
We compare the performance of our algorithm with two recent methods that can compute translational and generalized PD. One is a global approach~\cite{Pan:2013:EPD} that uses active learning to generate the samples in the entire configuration space, and then computes the contact space approximation as the surface separating collision-free and in-collision samples. Its approximation of the contact surface is relatively smooth due to its leverage of SVMs and hence this method may not provide accurate PD results for query configurations with small penetrations. Overall, this approach provide a conservative bound on the PD.
The second method uses local optimization~\cite{Tang:2014:IGP,Je:2012:PRP} to compute translational and generalized PD. It starts from a good initial guess to the PD result and optimizes the computation based on appropriate metric. Its accuracy depends on the initial guess and it may get stuck in a local minimum. Unlike our method or the active learning approach, this method has no precomputation. Therefore this approach can also be used for PD computation involving deformable models. Recently, Kim et al.~\cite{Kim:2015:IROS} presented a hybrid approach that combines the global method~\cite{Pan:2013:EPD} with the local method~\cite{Tang:2014:IGP} to overcome some of these problems. 

\noindent {\bf{Runtime Performance:}} We compare the performance of runtime query of our approach with these methods, and the result is shown in Table~\ref{table:PDQuery}, for both general and translational PD formulations. We observe that our approach is significantly faster than other two approaches, because our method can generate samples more evenly and densely distributed in the contact space, and thus query configuration can easily find a nearest-neighbor contact configuration in fewer iterations.

\noindent {\bf{Accuracy of PD computation:}} In our experiments, our method quickly generates more than 1 million samples in 10-12 minutes with less than 5 MB storage. The error of our online PD query is about 5\% of the actual PD value for both the translational and general PD, while this value is 15-20\% for~\cite{Pan:2013:EPD} and 10\% for~\cite{Tang:2014:IGP,Je:2012:PRP}. The main reason for this improvement is because our method searches over the entire contact space globally while the local optimization methods~\cite{Tang:2014:IGP,Je:2012:PRP} search along the gradient direction. The gradient search strategy can only generate suboptimal results, especially for complex models. Our approach also outperforms~\cite{Pan:2013:EPD,Kim:2015:IROS} which approximates the contact space by active learning, because it can provide a more accurate approximation of the contact space.

We compare the accuracy between our method and the active learning approach and the local optimization approach by computing the per-frame error on a set of different benchmarks (CAD1, Bunny, Lion, Dragon and Donuts). The result is shown in Figure~\ref{fig:comparison}. From the results, we can observe that the PD error of our method is about 5\% for both the translational and general PD queries. The error is mainly caused by the resolution of the samples. But as we discussed in the algorithm, all the samples in our method is located on the contact space, and thus the result is much more accurate than that of~\cite{Pan:2013:EPD} which fits the contact space by machine learning method. While both methods use the same number of samples for general PD computation, the error of~\cite{Pan:2013:EPD} is about 10\% - 20\%, as shown in the third row of Figure~\ref{fig:comparison}, while our method is about 5\% as shown in the fourth row of Figure~\ref{fig:comparison}. For the translational PD, the error of our approach is about 3\%, which is smaller the the corresponding results in local optimization approaches~\cite{Tang:2014:IGP,Je:2012:PRP} as shown in the second row of Figure~\ref{fig:comparison}. Local optimization approaches also search over the objects' surface, but they do not perform well for objects with complex shapes. This is because they only search along the gradient direction and may only obtain sub-optimal solutions. For instance, they may miss the optimal results in areas of sharp features. Our method will not miss these areas in most situations since we can generate 1 million samples over the contact space, which help our method to explore the entire contact space. By searching through these samples, our method can generate more accurate results than gradient searching method. Overall, our approach is more accurate than prior PD methods.

\begin{figure*}[!htb]
\centering
\subfloat[]{\includegraphics[width=0.19\linewidth]{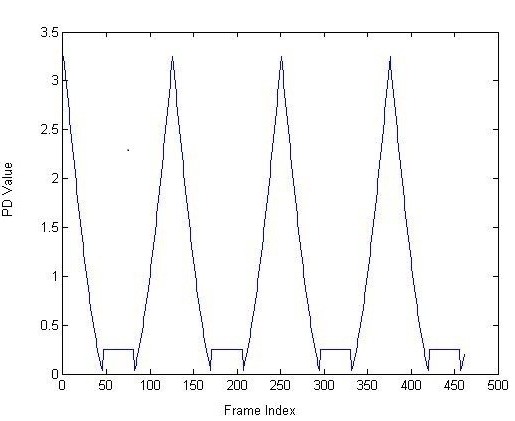}} 
\subfloat[]{\includegraphics[width=0.19\linewidth]{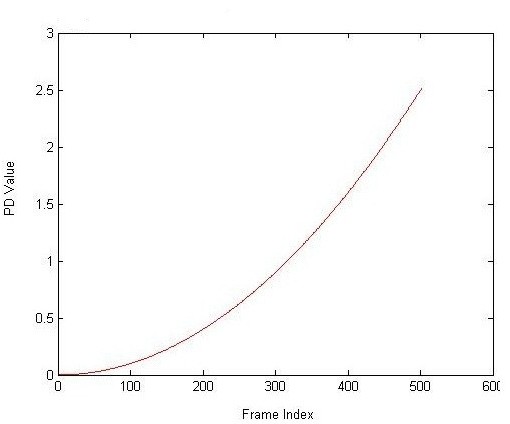}} 
\subfloat[]{\includegraphics[width=0.19\linewidth]{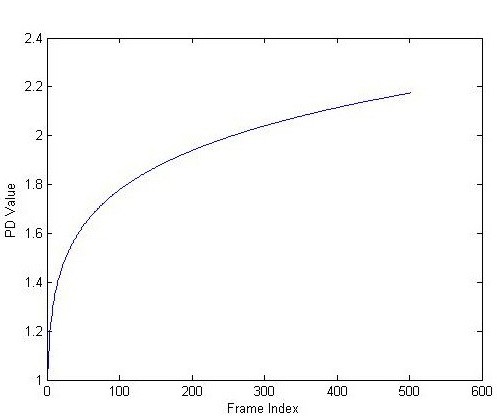}} 
\subfloat[]{\includegraphics[width=0.19\linewidth]{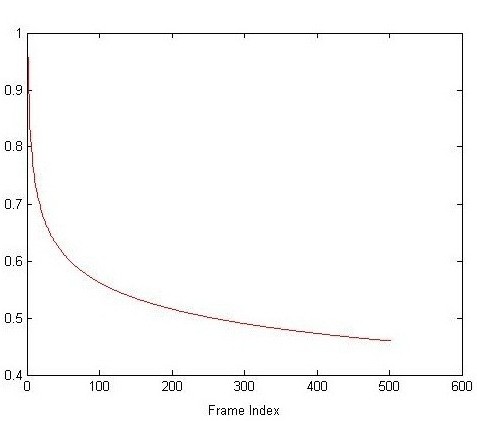}} 
\subfloat[]{\includegraphics[width=0.19\linewidth]{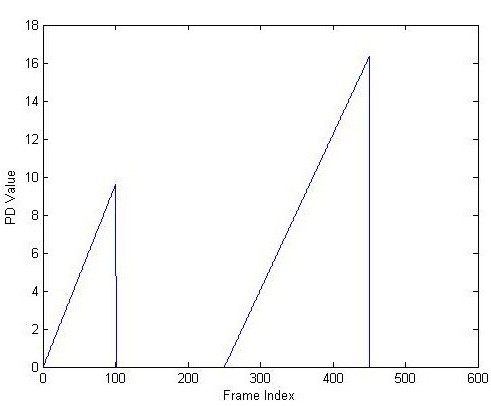}} \\
\subfloat[]{\includegraphics[width=0.19\linewidth]{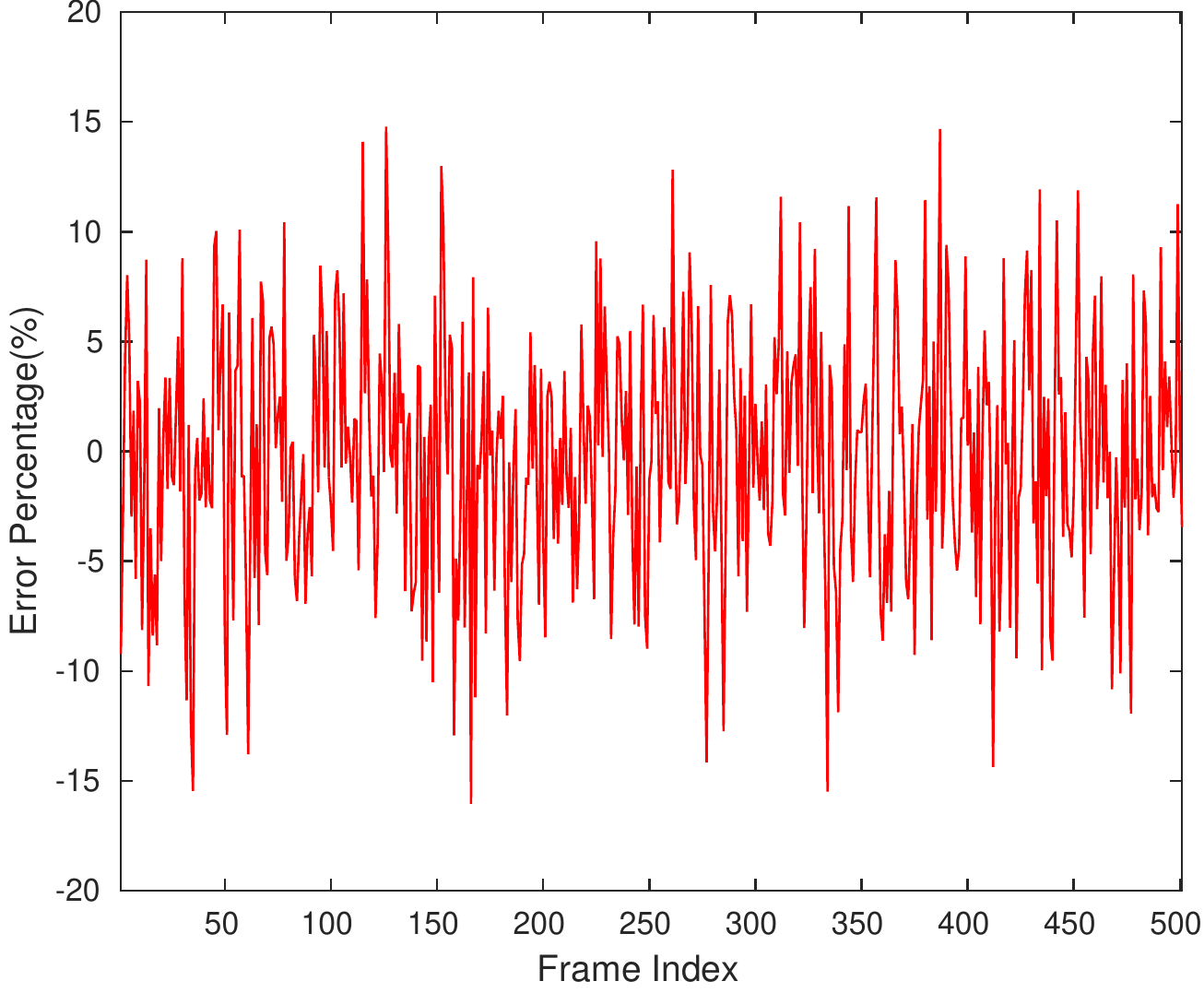}} 
\subfloat[]{\includegraphics[width=0.19\linewidth]{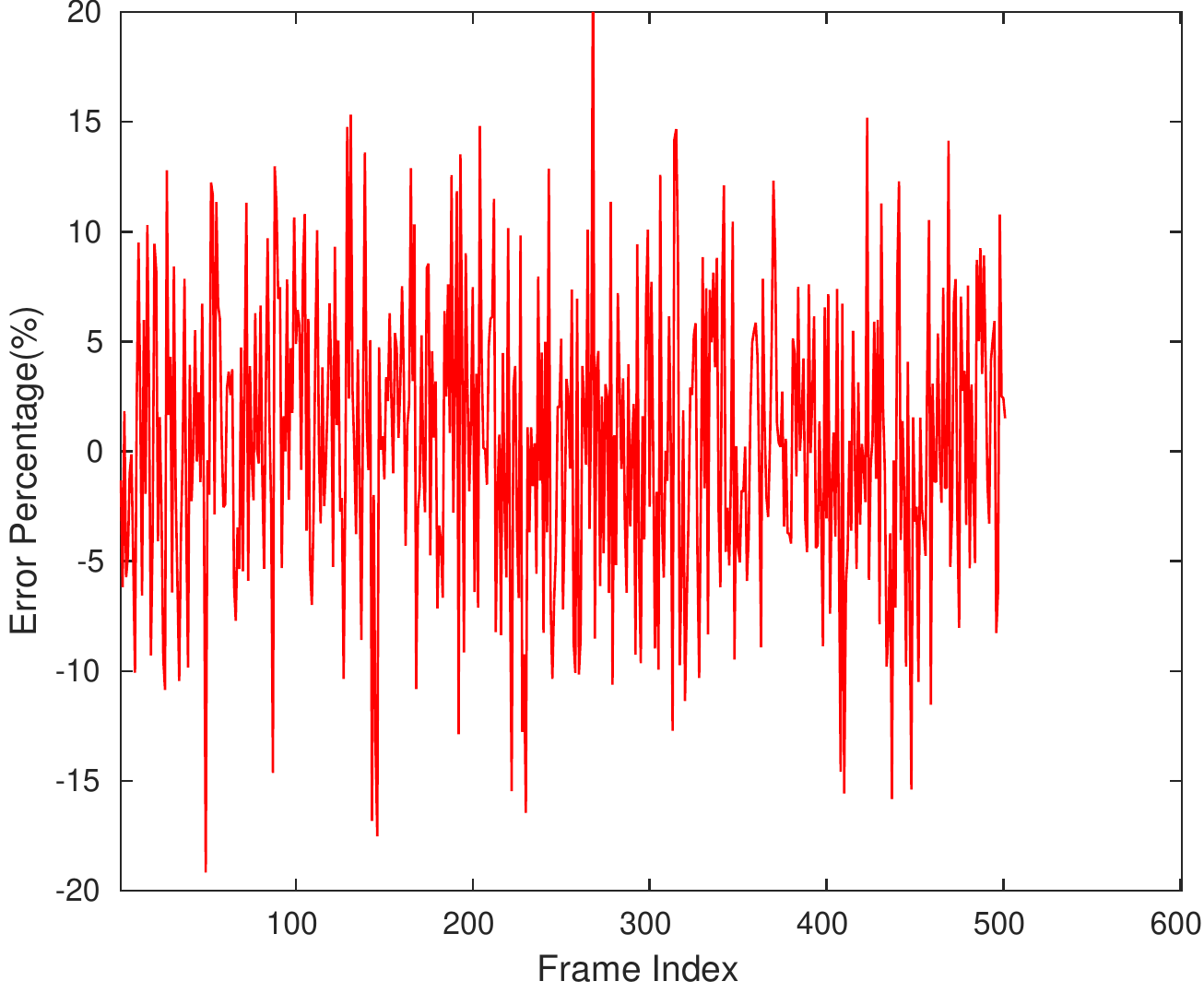}} 
\subfloat[]{\includegraphics[width=0.19\linewidth]{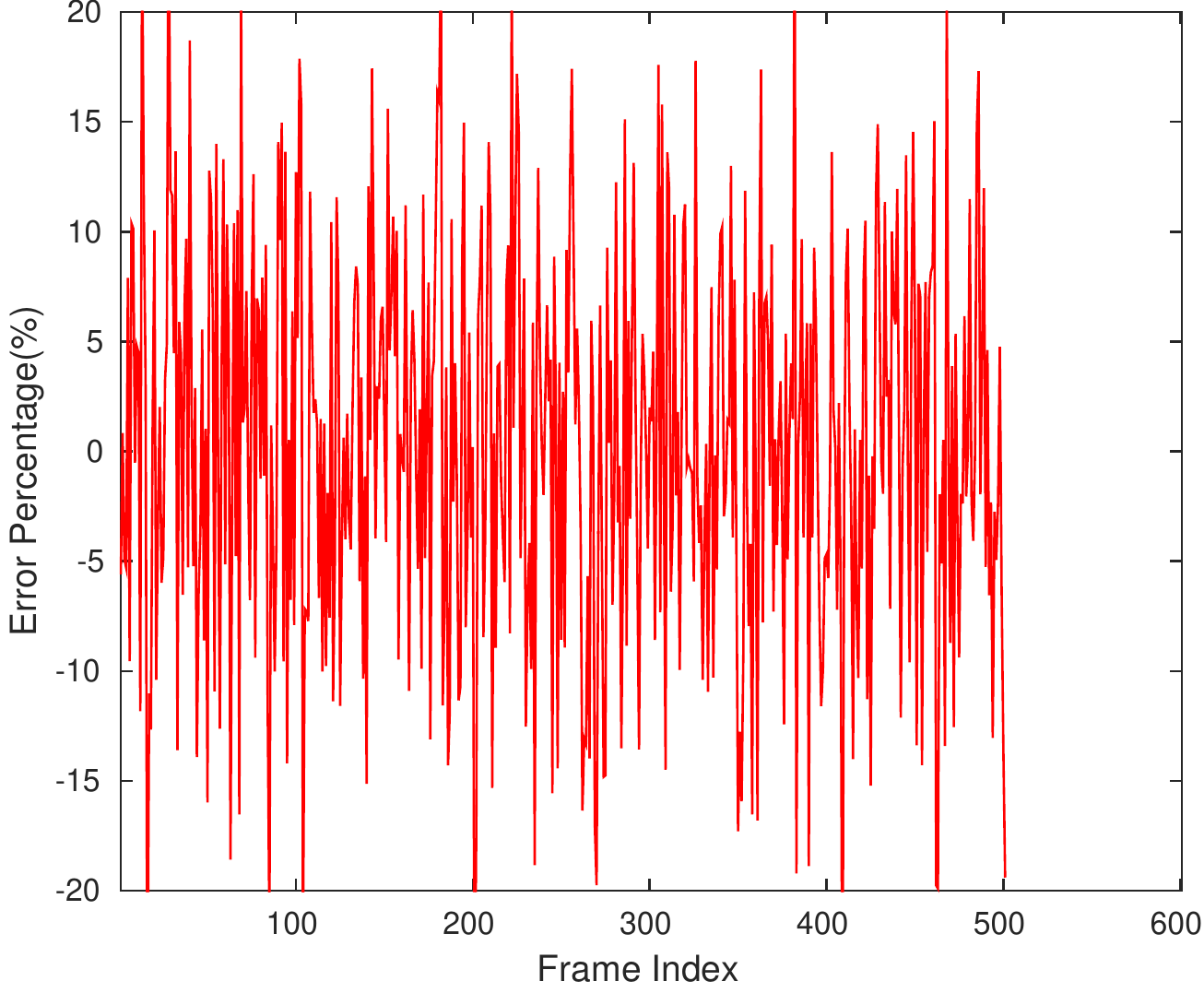}} 
\subfloat[]{\includegraphics[width=0.19\linewidth]{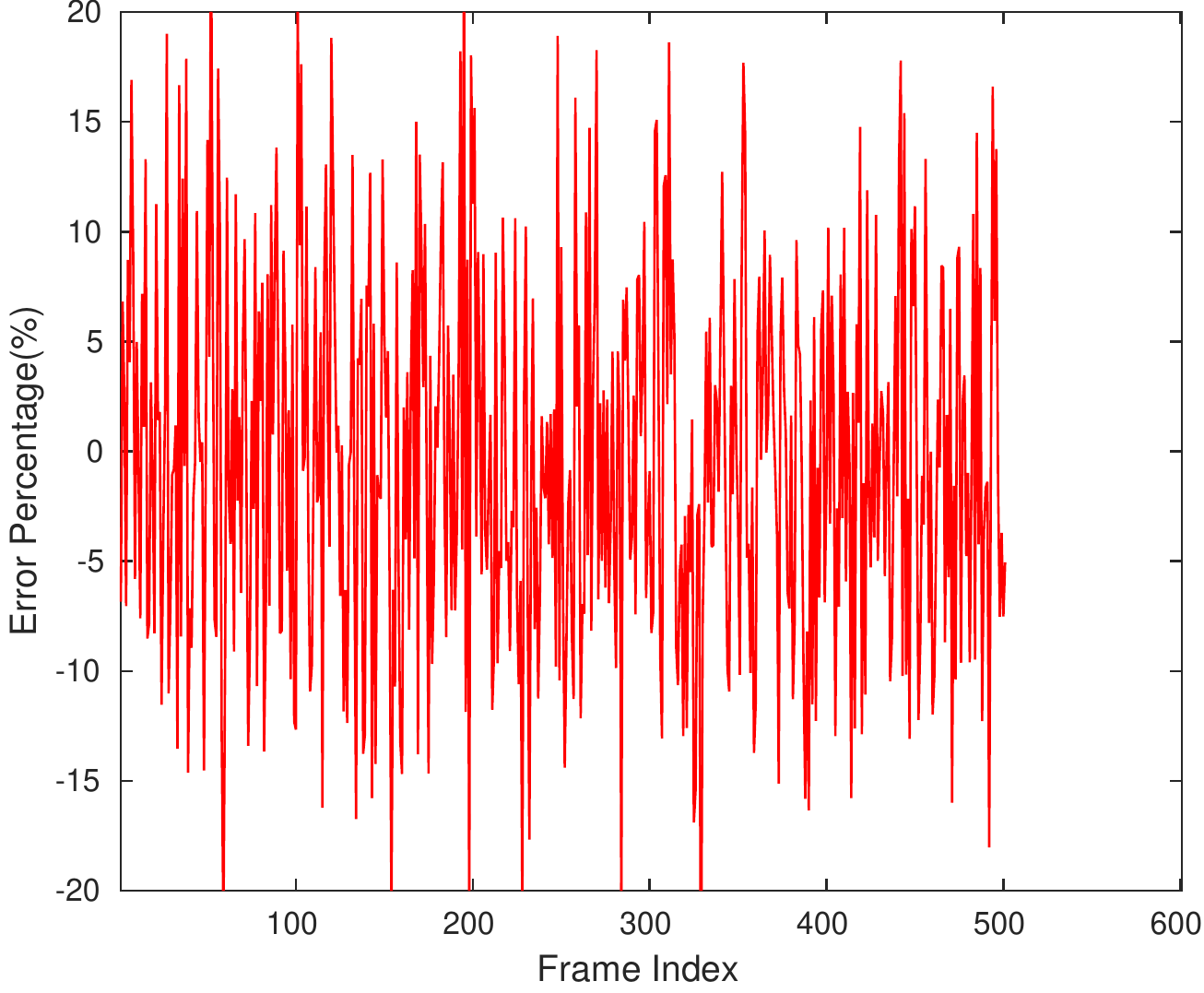}} 
\subfloat[]{\includegraphics[width=0.19\linewidth]{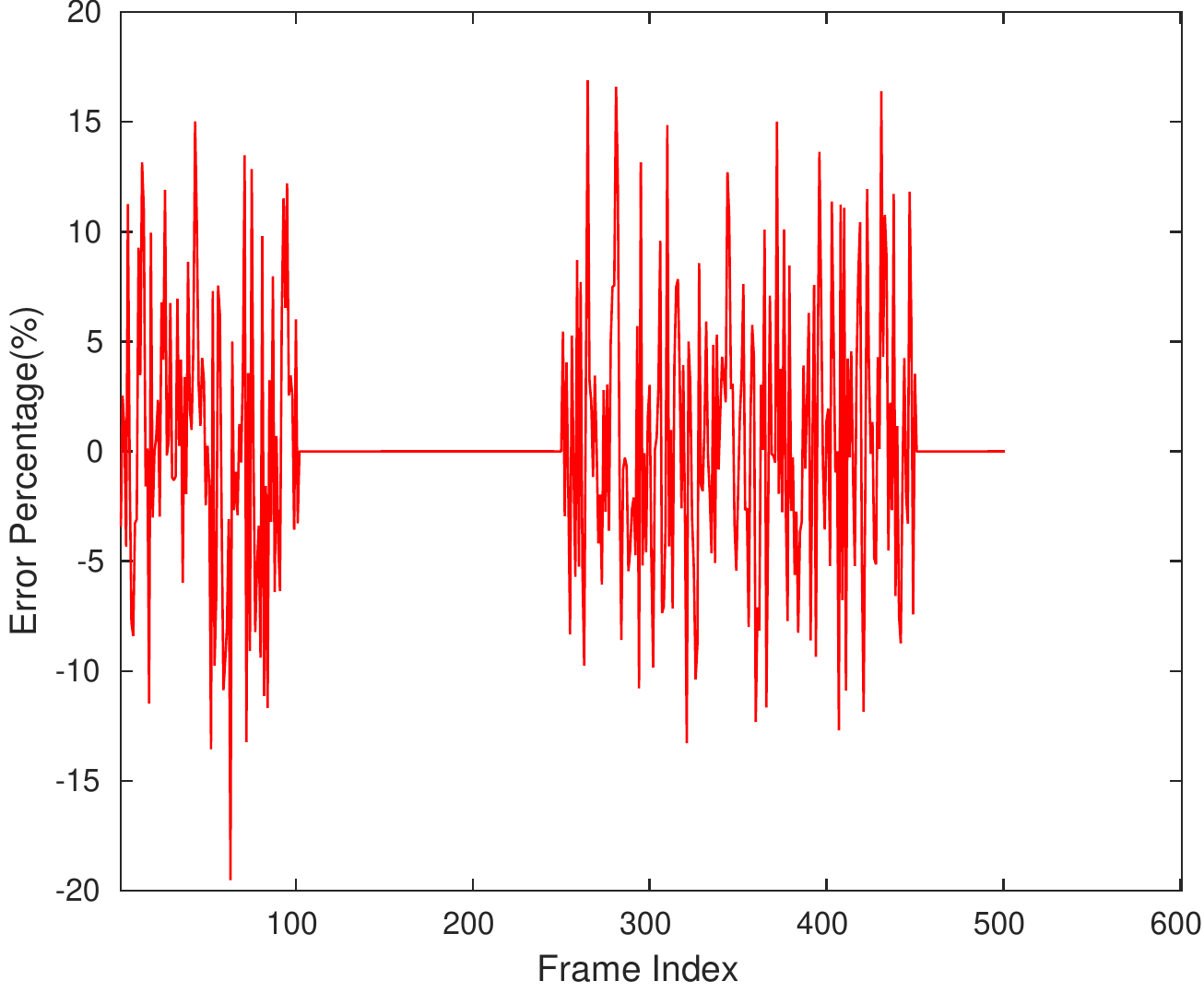}} \\
\subfloat[]{\includegraphics[width=0.19\linewidth]{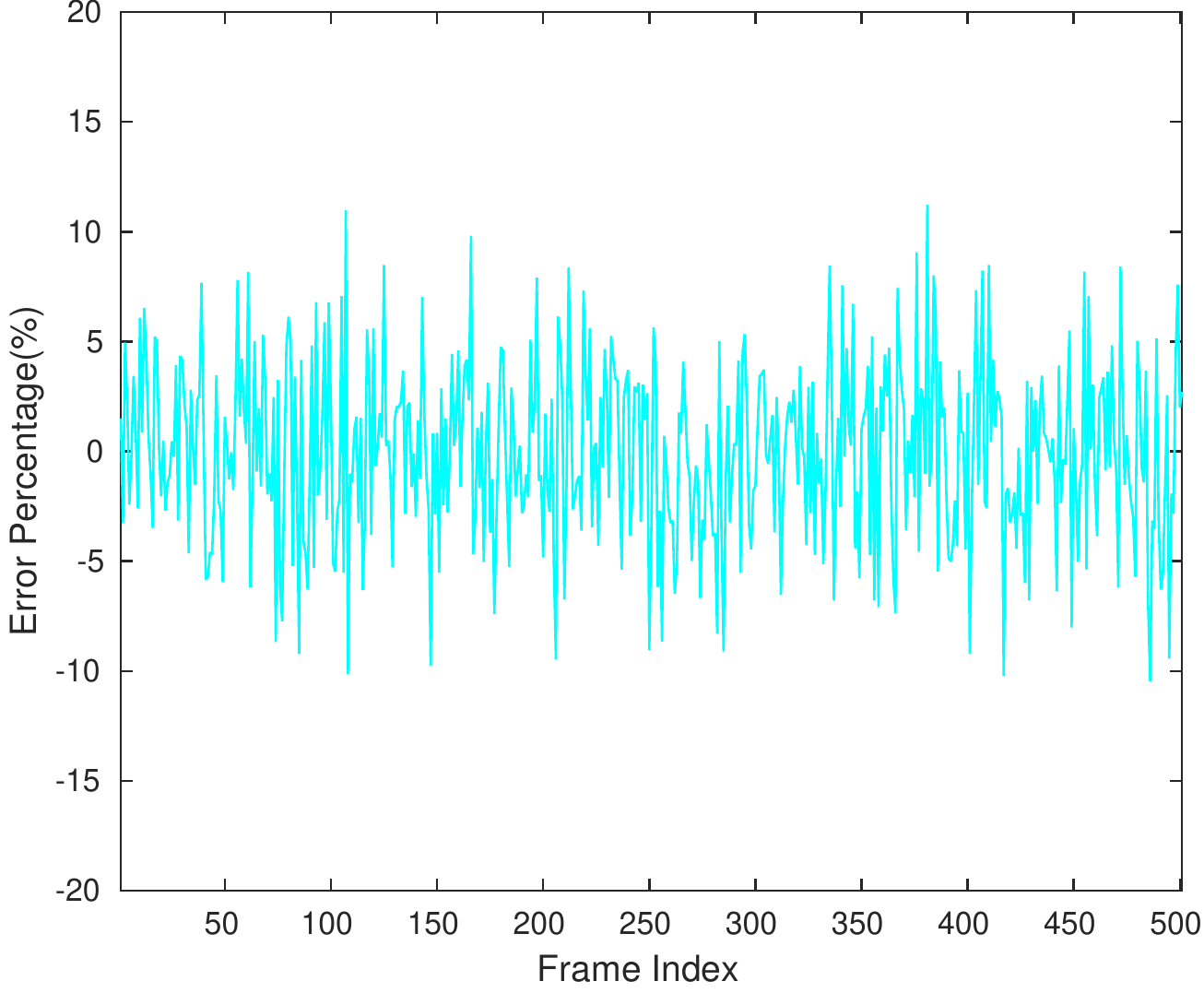}} 
\subfloat[]{\includegraphics[width=0.19\linewidth]{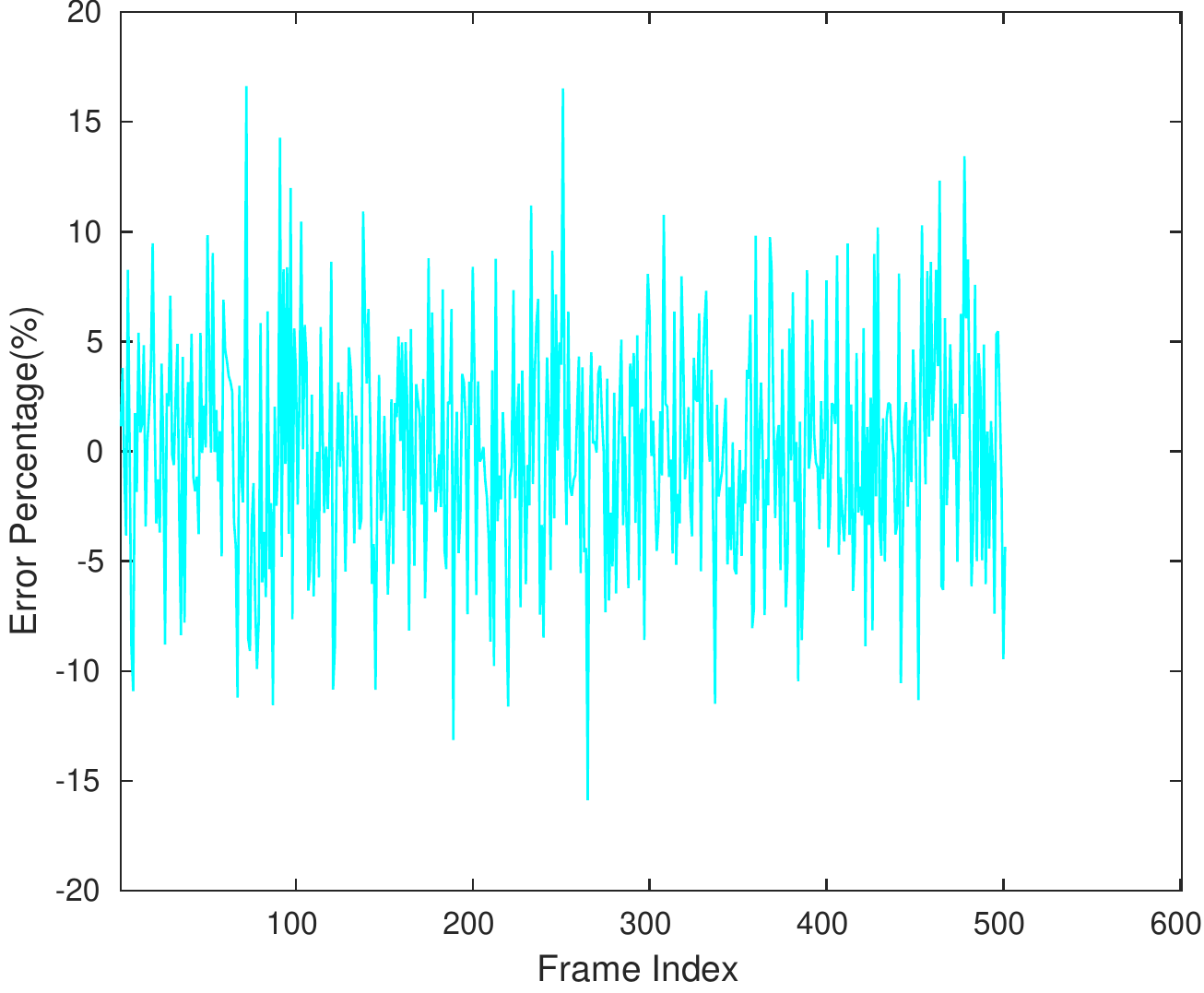}} 
\subfloat[]{\includegraphics[width=0.19\linewidth]{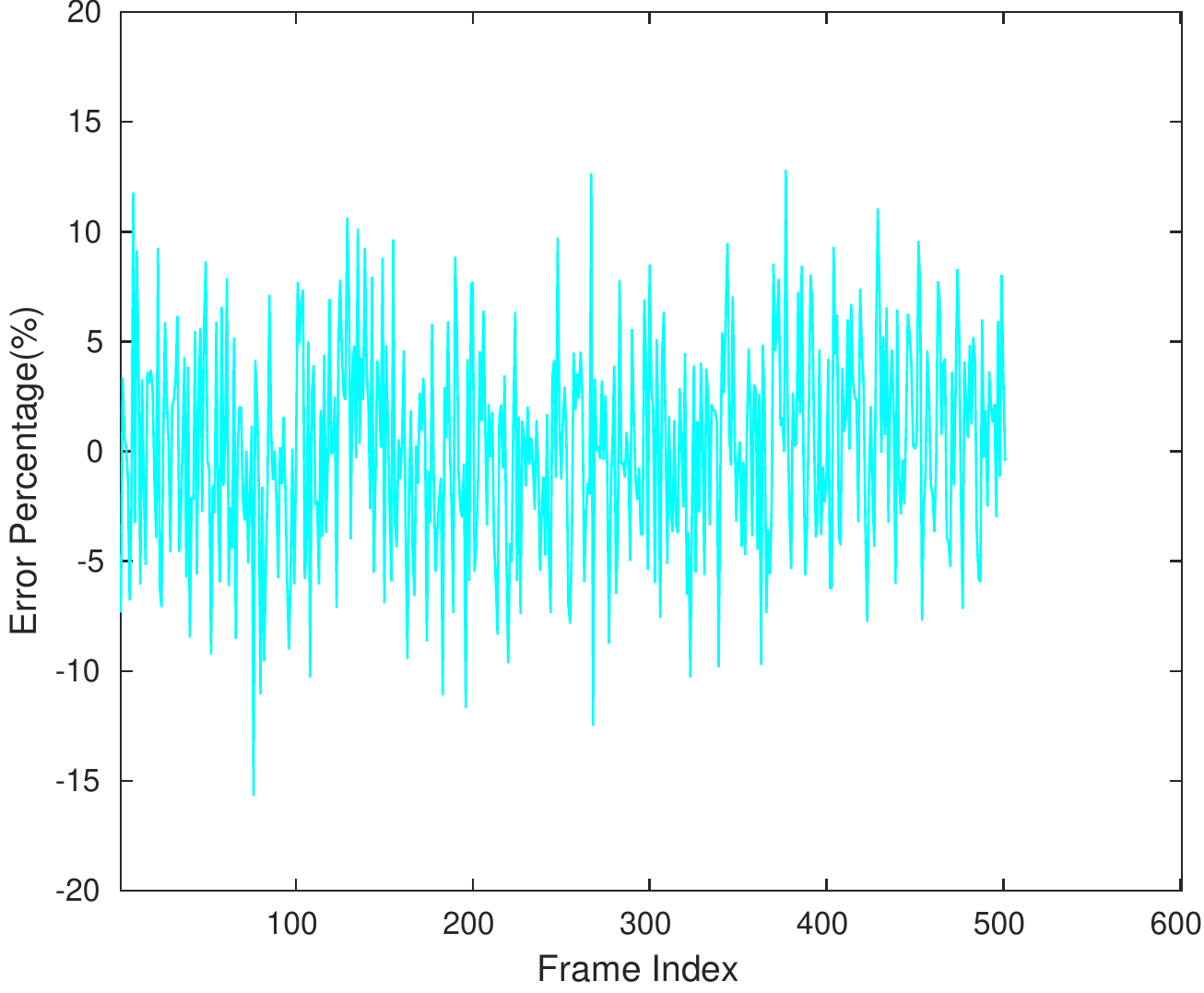}} 
\subfloat[]{\includegraphics[width=0.19\linewidth]{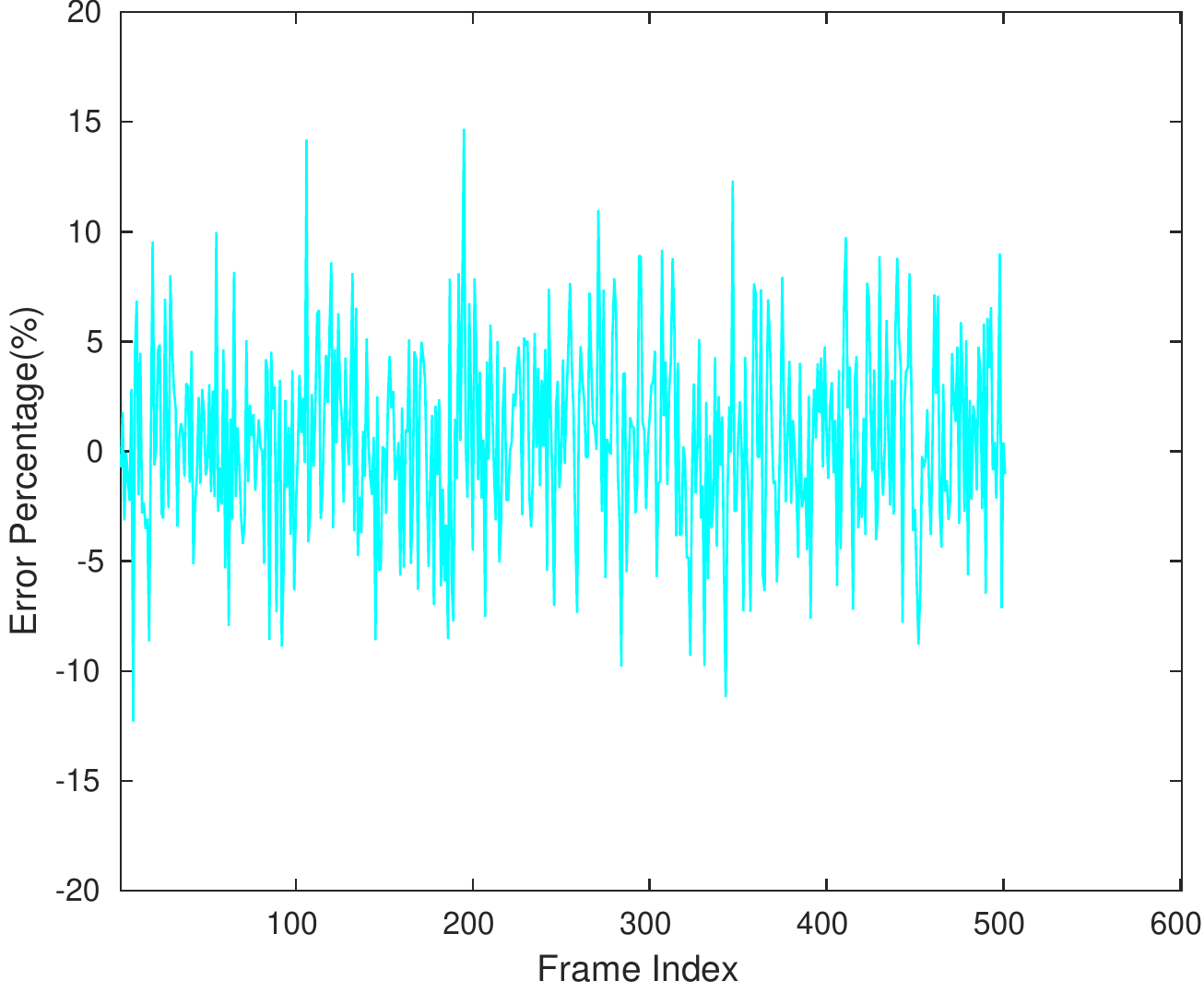}} 
\subfloat[]{\includegraphics[width=0.19\linewidth]{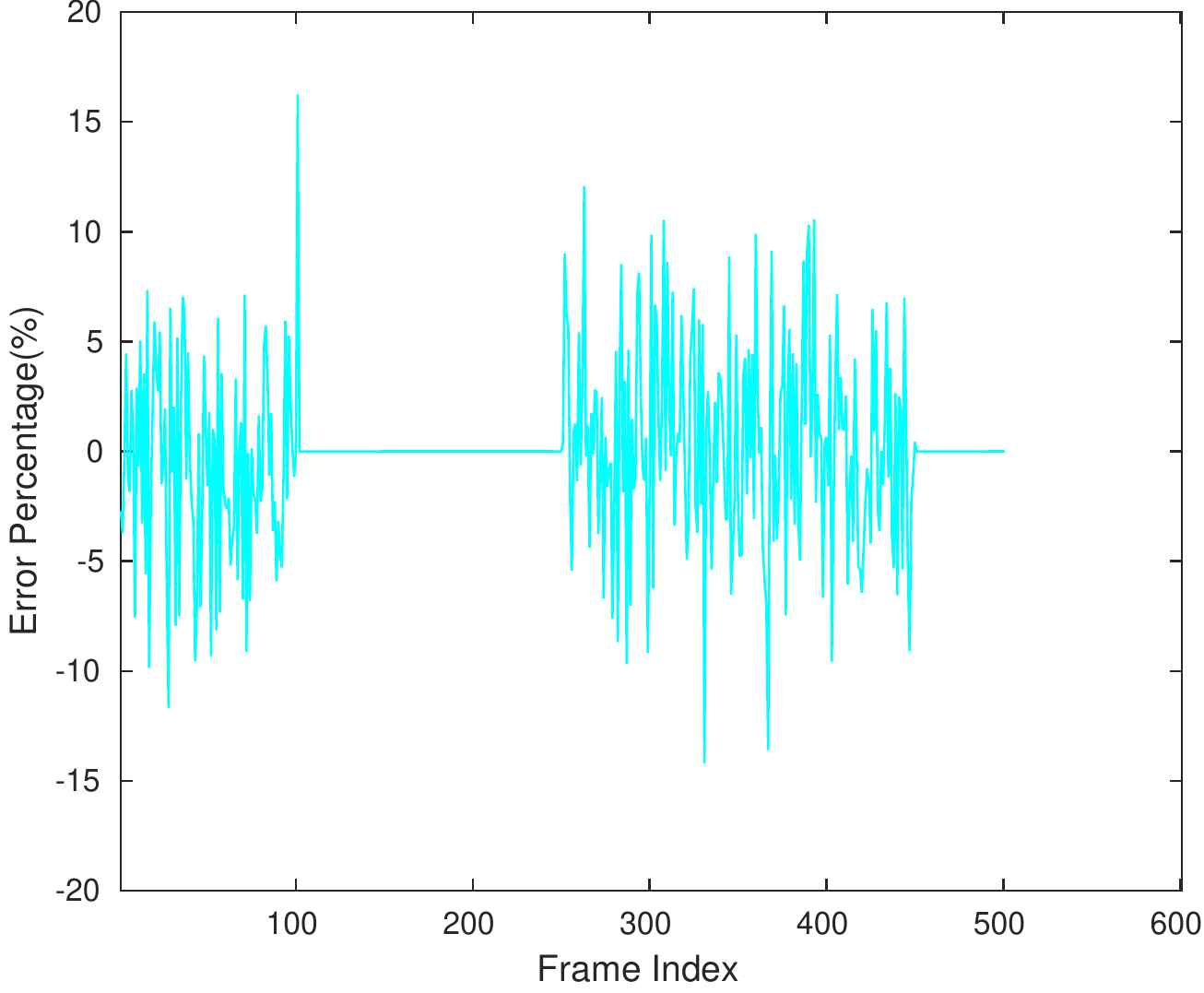}} \\
\subfloat[]{\includegraphics[width=0.19\linewidth]{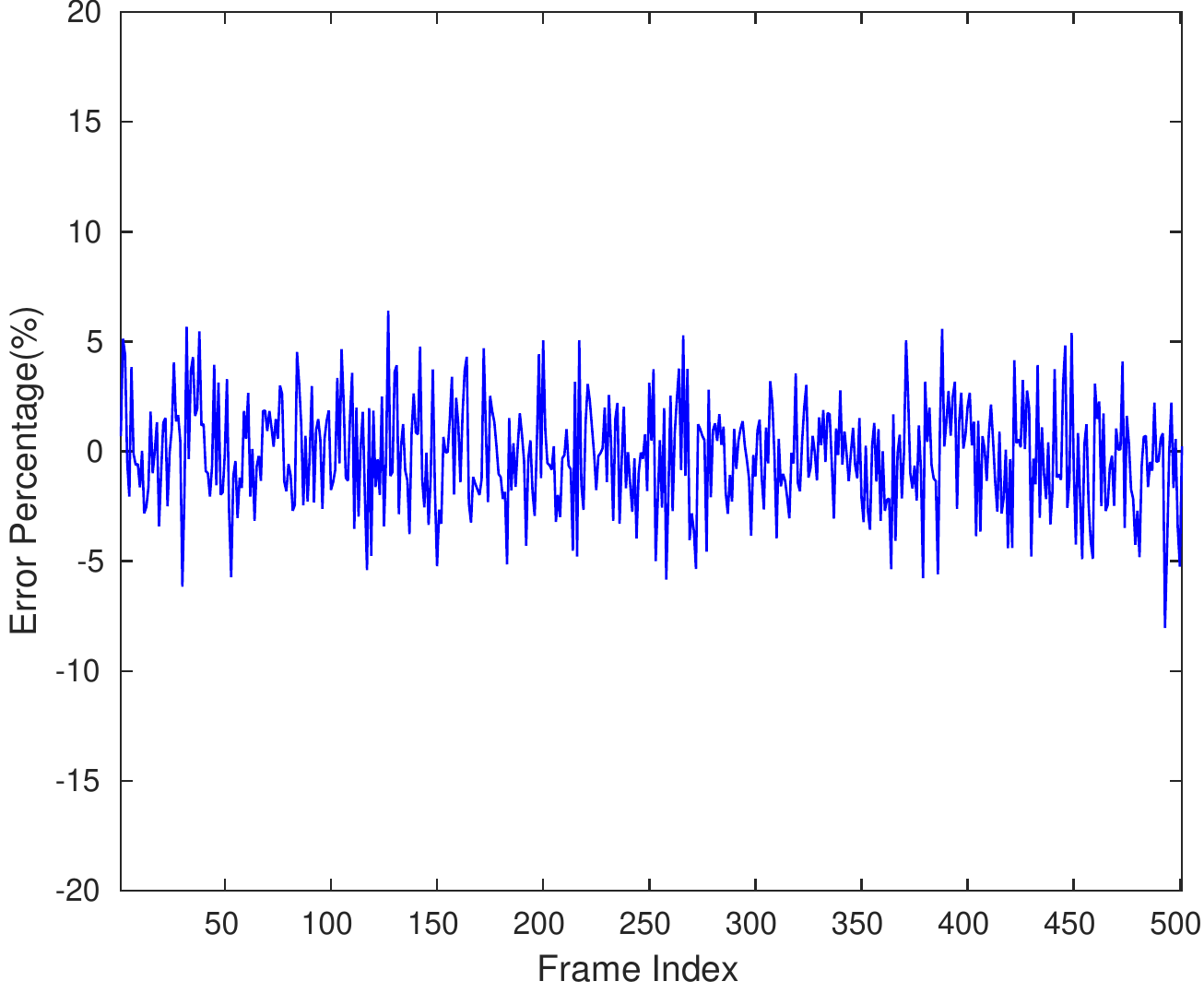}} 
\subfloat[]{\includegraphics[width=0.19\linewidth]{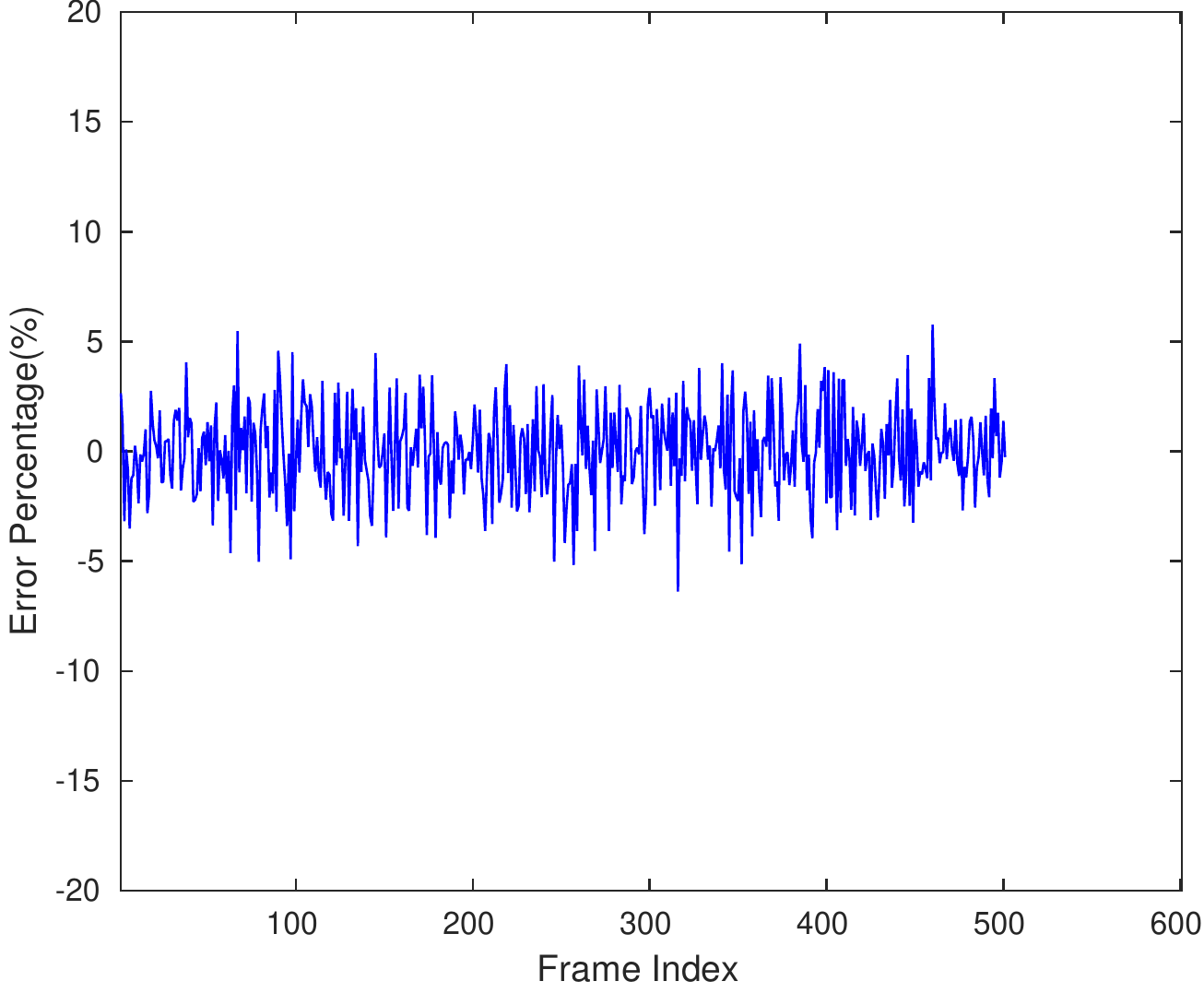}} 
\subfloat[]{\includegraphics[width=0.19\linewidth]{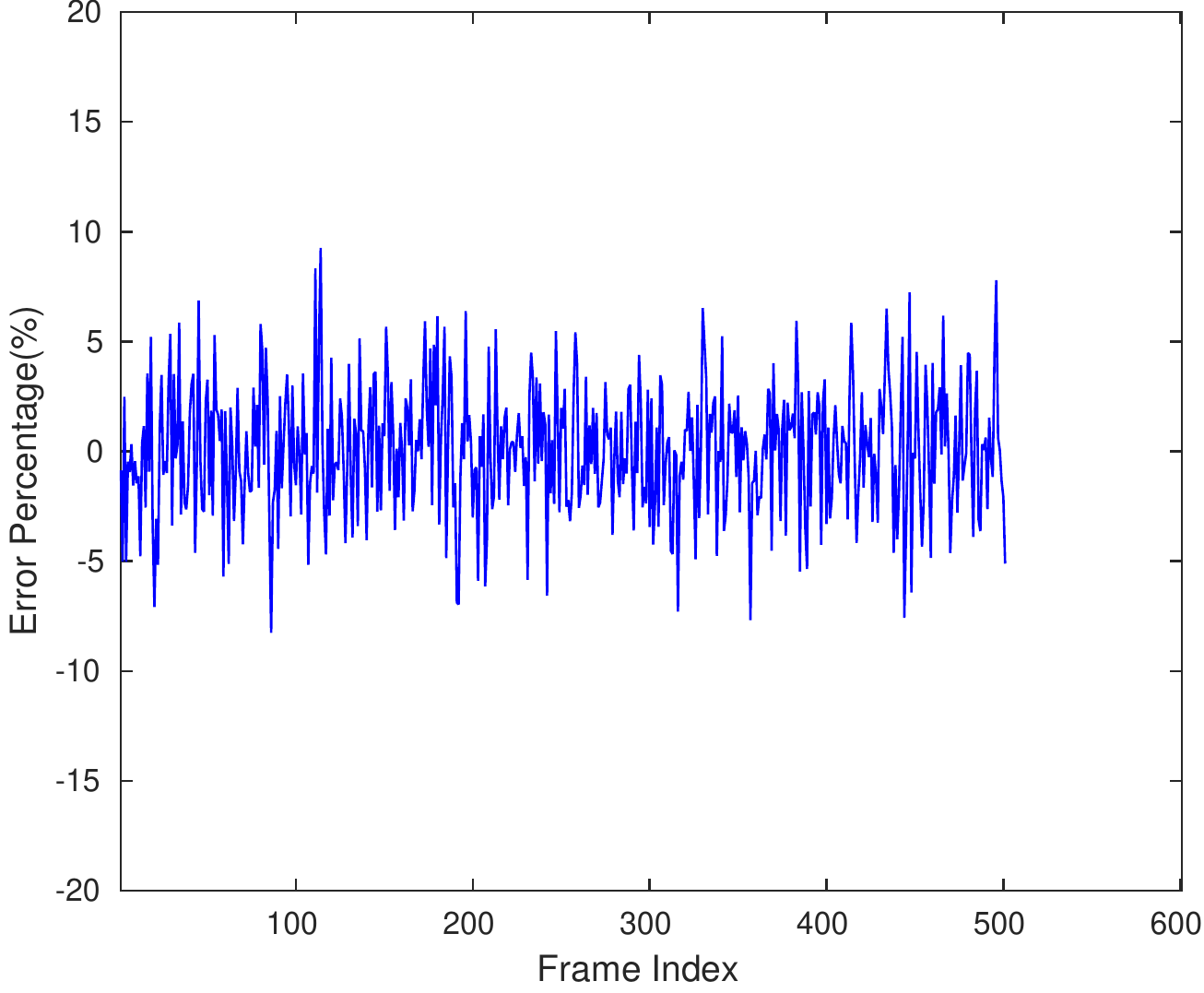}} 
\subfloat[]{\includegraphics[width=0.19\linewidth]{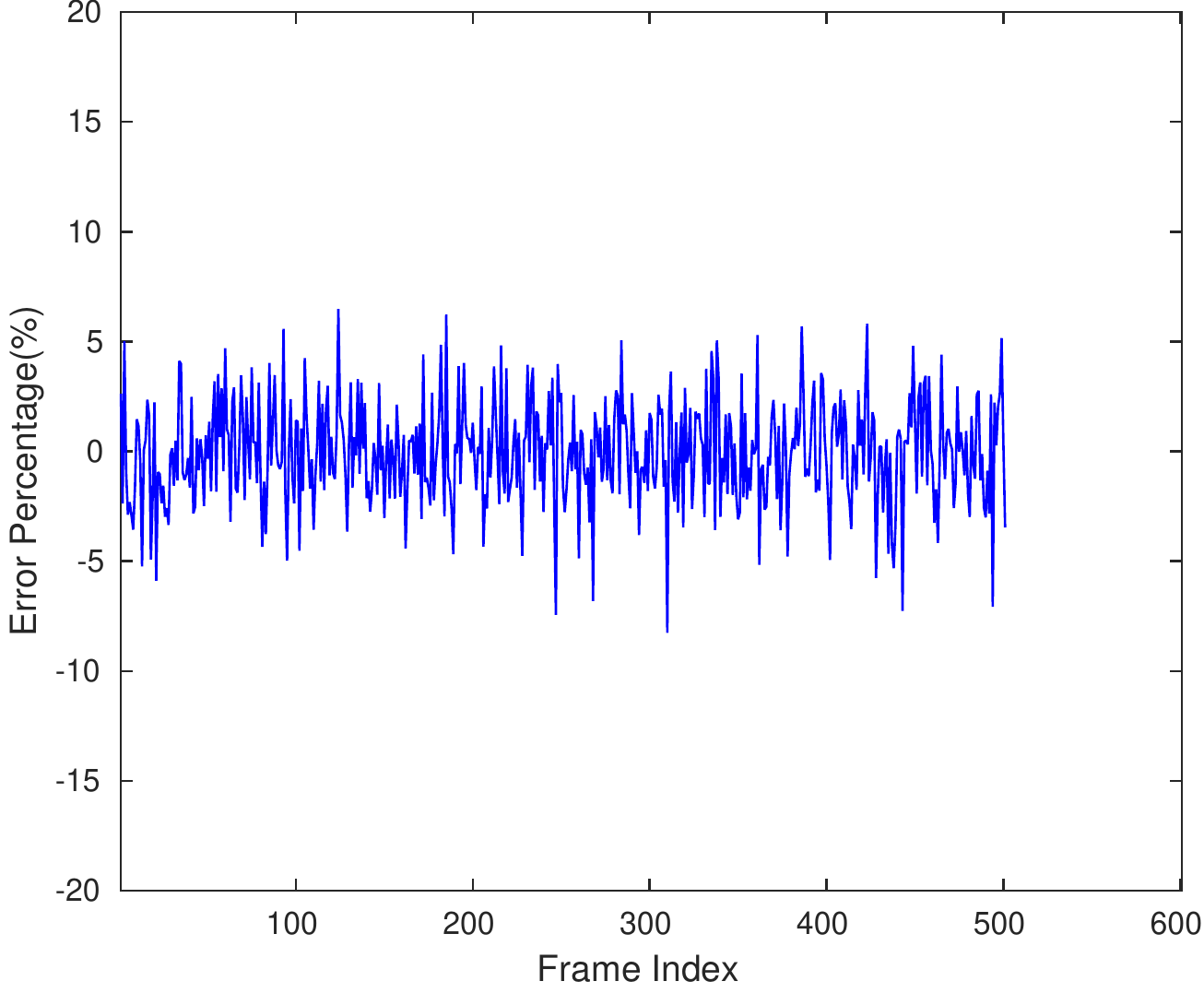}} 
\subfloat[]{\includegraphics[width=0.19\linewidth]{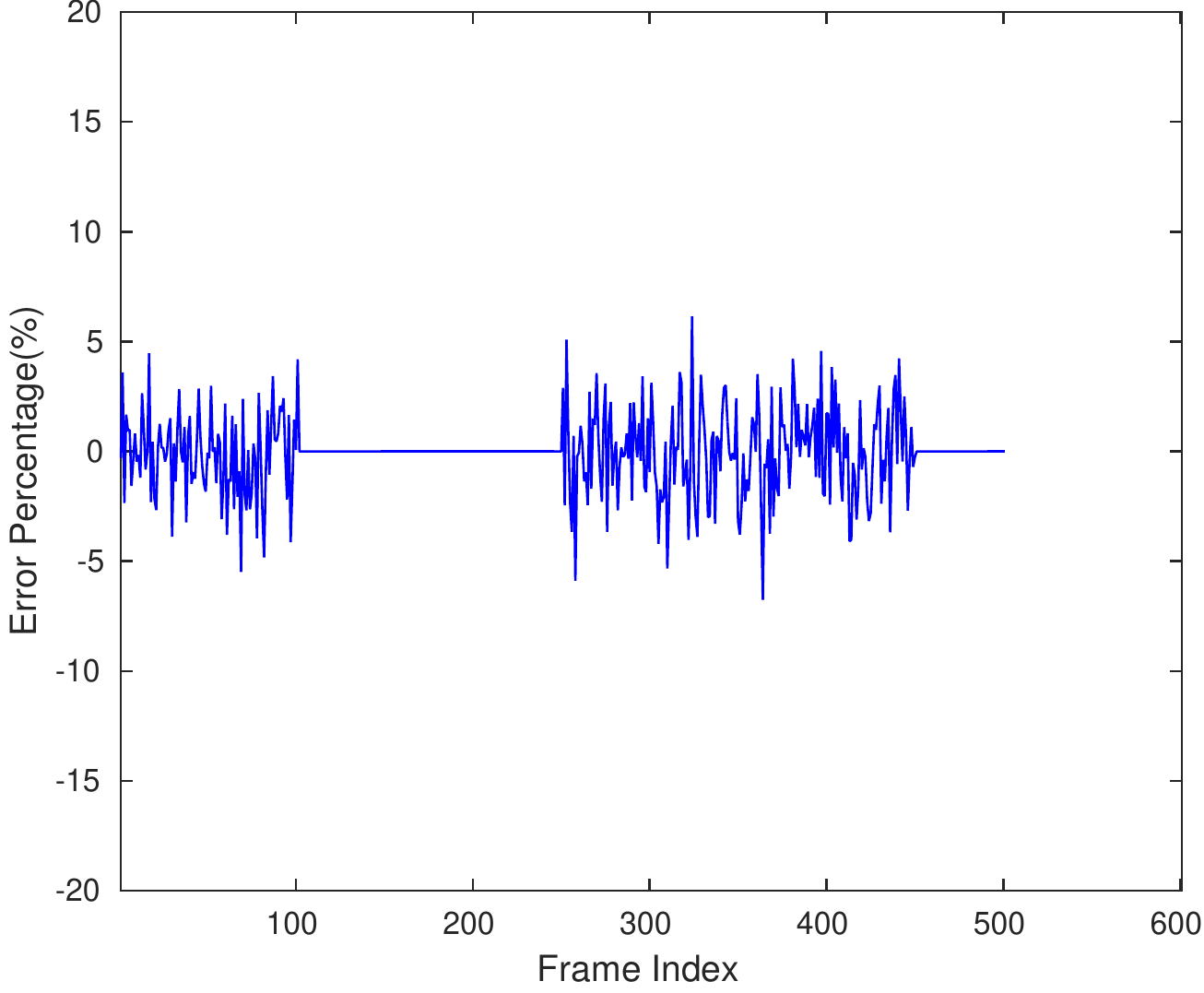}} \\
\subfloat[]{\includegraphics[width=0.19\linewidth]{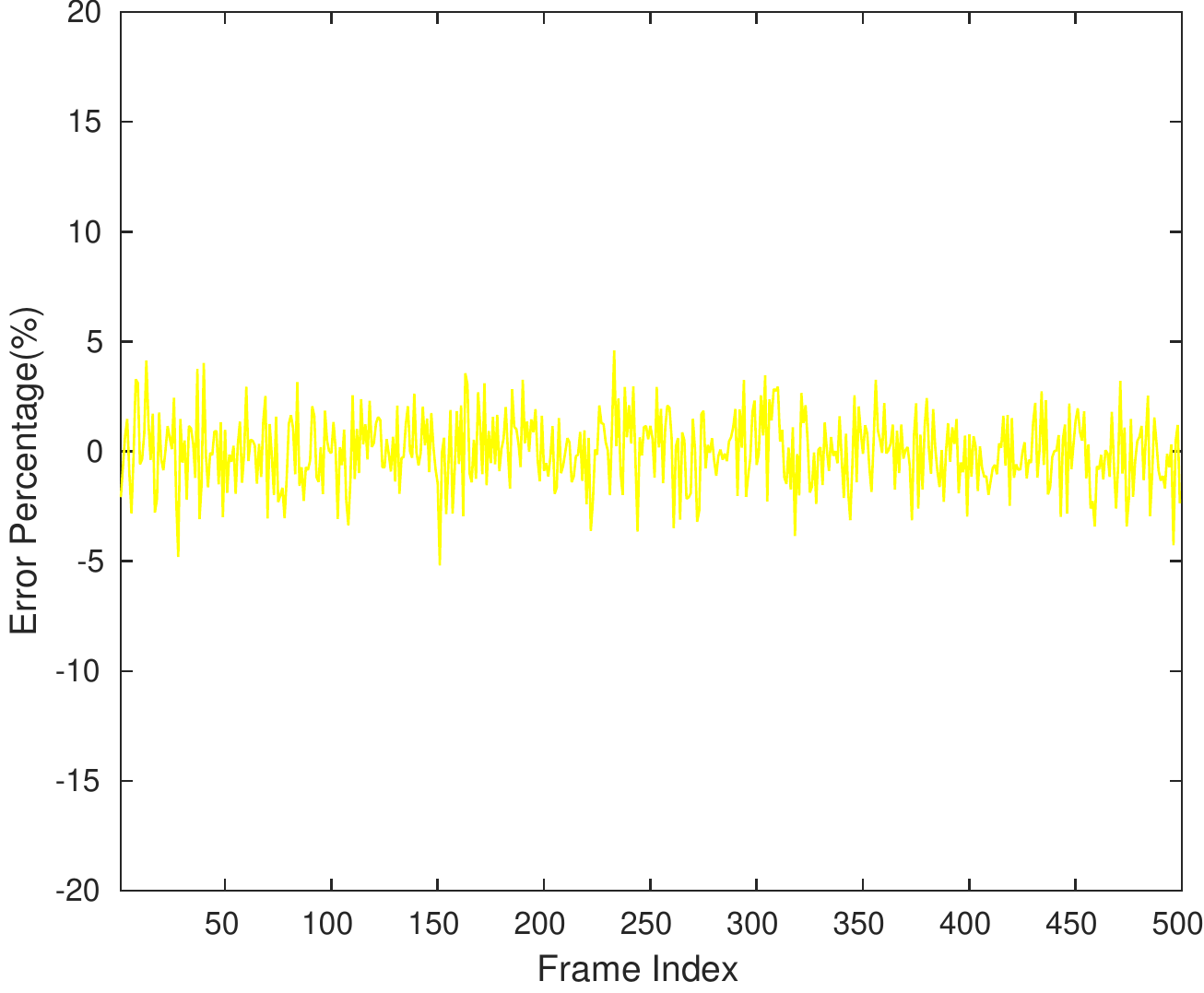}} 
\subfloat[]{\includegraphics[width=0.19\linewidth]{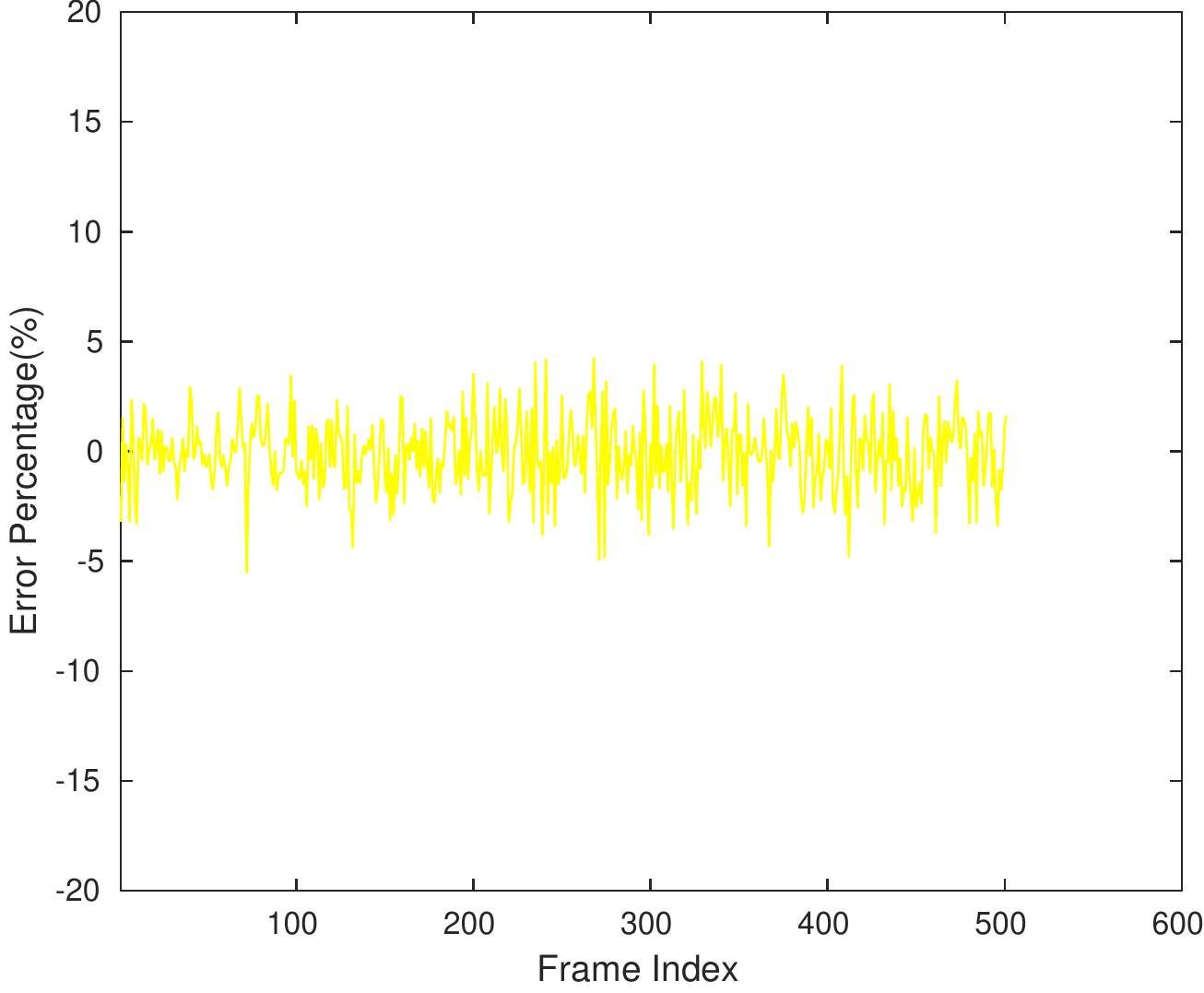}} 
\subfloat[]{\includegraphics[width=0.19\linewidth]{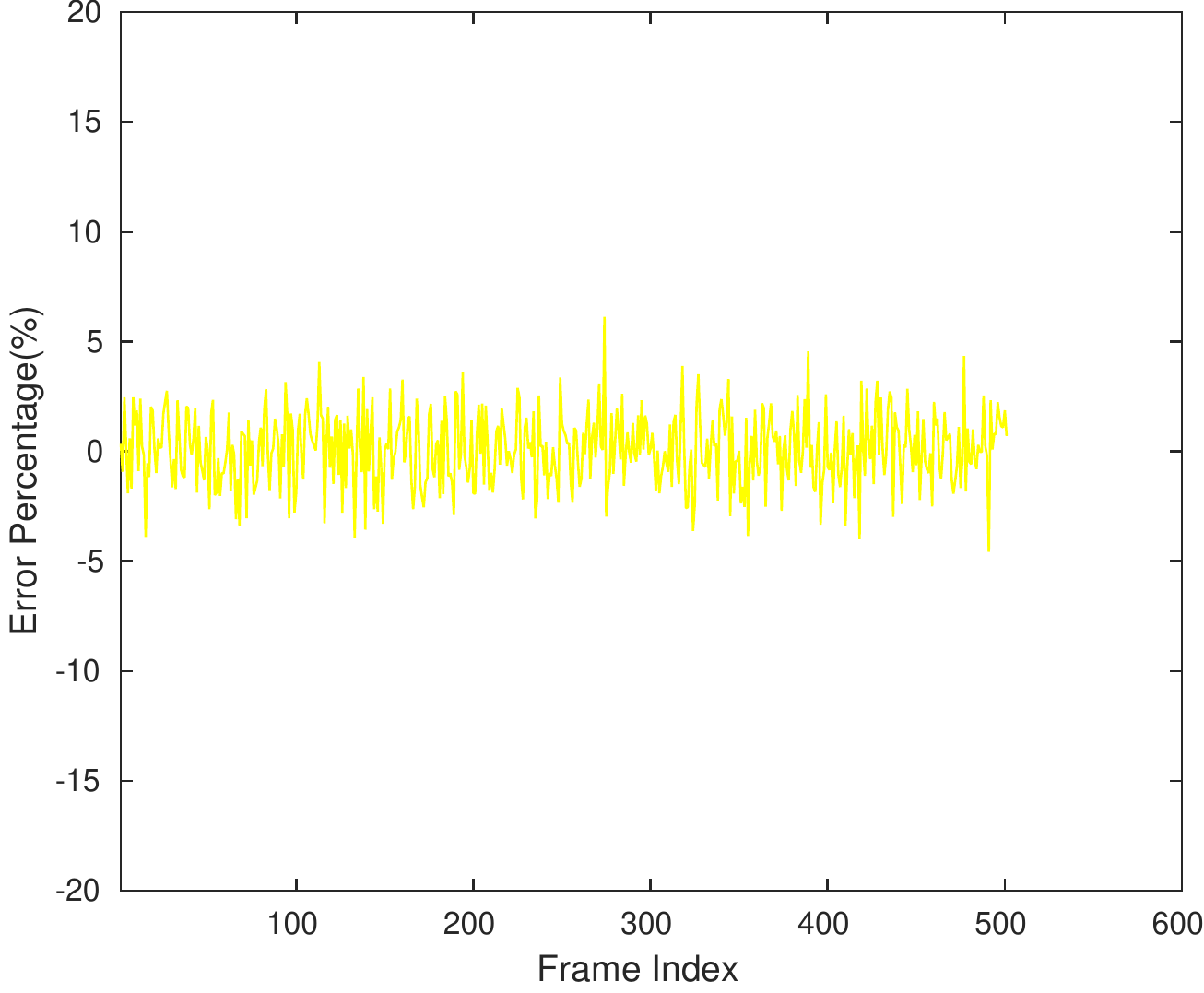}} 
\subfloat[]{\includegraphics[width=0.19\linewidth]{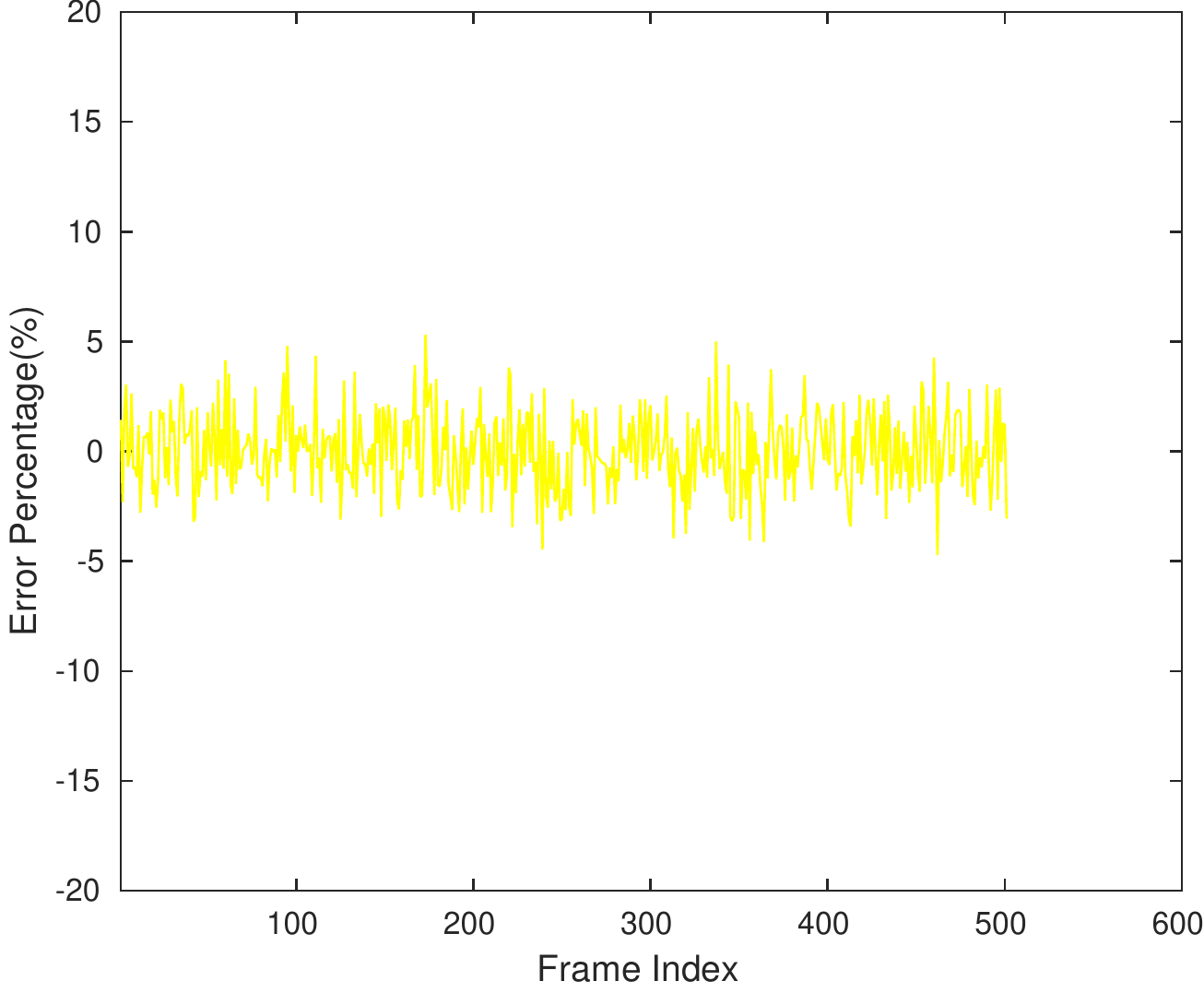}} 
\subfloat[]{\includegraphics[width=0.19\linewidth]{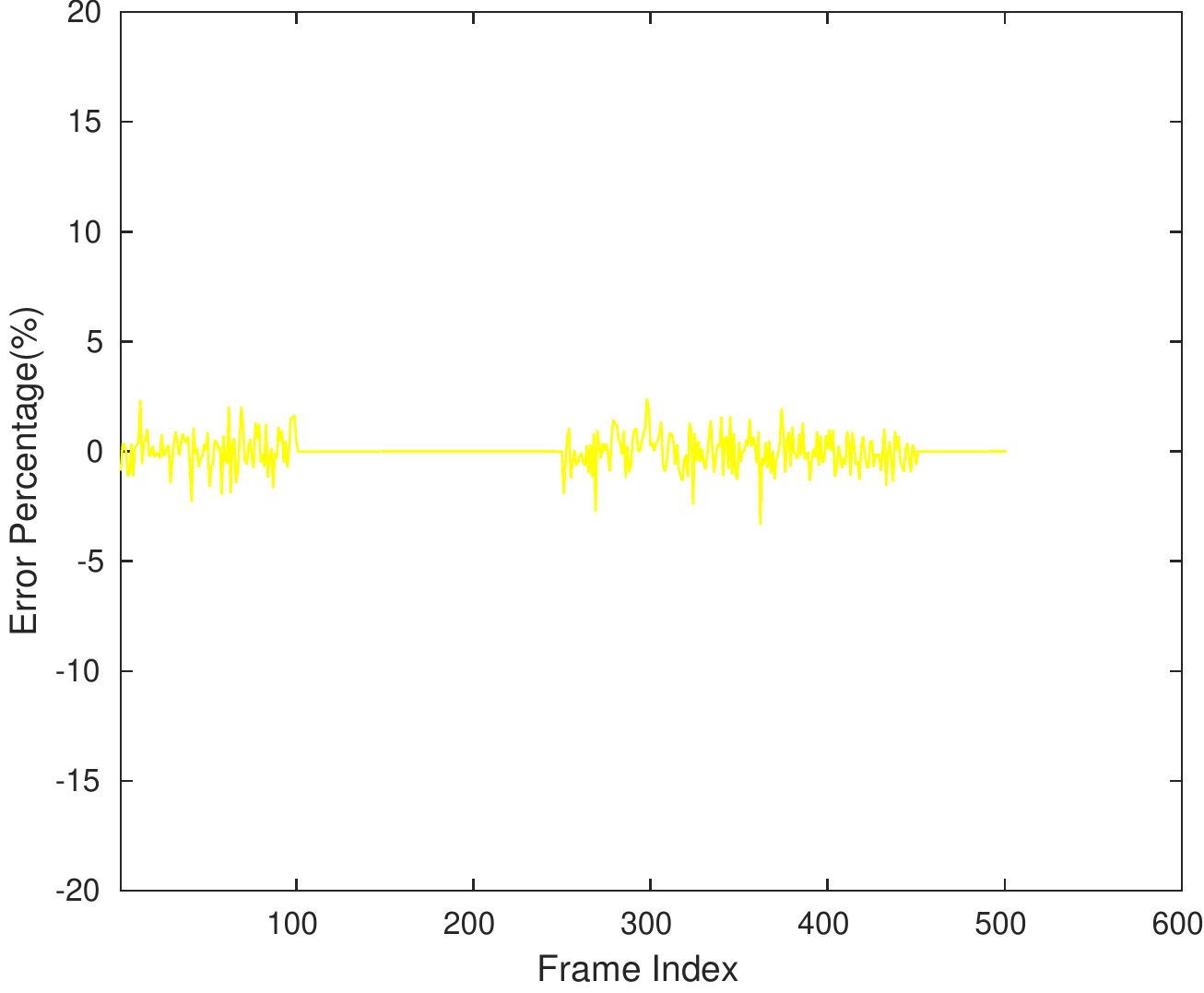}} \\

\caption{Error comparison among our method with previous approaches. For general PD, we compare our method (4th row, blue) and the active learning approach~\cite{Pan:2013:EPD} (2nd row, red). For translational PD, we compare our method (5th row, yellow) and the local optimization approach~\cite{Tang:2014:IGP,Je:2012:PRP} (3rd row, cyan). The comparison is performed on five different benchmarks (from left to right: CAD1, Bunny, Lion, Dragon and Donuts). These errors are computed according to the ground-truth (the first row) computed by a Minkowski sum approach~\cite{Lien:2009:ASM}. The $x$-axis is the frame index and the $y$-axis is the error magnitude. We observe a smaller error in our approach than other two approaches: our error is 50-80 times smaller than~\cite{Tang:2014:IGP,Je:2012:PRP} and~\cite{Pan:2013:EPD}.}
\label{fig:comparison}
\end{figure*}

\begin{small}
\begin{table}
\centering{}
\begin{tabular}{|c|c|c|c|c|c|c|c|c|}
 \hline
{}& Donut & CAD1  &
CAD2 & Dragon & Teeth & Buddha \tabularnewline
\hline
 Our ($PD_g$) &{0.93}  &{0.95} &{1.03} &{1.18} &{0.98} &{1.31}\tabularnewline
\hline
 Our($PD_t$) &{0.81}  &{0.85} &{1.13} &{1.06} &{1.01} &{1.11}\tabularnewline
\hline
Active \protect\cite{Pan:2013:EPD} &{0.95}  &{1.41} &{1.32} &{1.43}&{0.92} &{1.48} \tabularnewline
\hline
Local \protect\cite{Tang:2014:IGP} &{6.56}  &{78.3} &{54.4} &{132}&{111} &{153} \tabularnewline
\hline
\end{tabular}
\caption{\footnotesize{Comparison of the run-time cost for a single PD query (in ms) between our method and two prior methods: active learning~\protect\cite{Pan:2013:EPD} and local optimization~\protect\cite{Tang:2014:IGP,Je:2012:PRP}. Our method uses $100,000$ random-samples for precomputation, and the time cost is averaged over $1,000$ randomly generated in-collision queries. }}
\label{table:PDQuery}
\vspace*{-0.3in}
\end{table}
\end{small}

\section{Analysis}
\label{sec:analysis}
In this section, we evaluate some properties of the propagation sampling algorithm used in our precomputation phase. In particular, we discuss 1) the bounds on the time complexity of our precomputation scheme, and 2) why our approach can generate samples with better distribution in the contact space than pure random sampling.

\subsection{Time Complexity}
For the time complexity analysis, we assume the object $A$ can only perform translational movements and has $m$ vertices, and object $B$ is a connected mesh with $n$ vertices, where $m \ll n$. We also denote $T_{DCD}$ and $T_{CCD}$ as the time costs for one continuous collision checking and one discrete collision checking respectively. On average, $T_{CCD} \approx 7.46 T_{DCD}$ in our 10 thousand random tests. Then we have the following estimation for the precomputation's time complexity:
\begin{theorem}
The precomputation's time complexity has a lower bound of $T_{CCD} + (n-1)T_{DCD}$, and has an upper bound of $n \lg (n) T_{CCD}$.
\label{thm:faster}
\end{theorem}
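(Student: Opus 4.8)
The plan is to analyze the two bounds separately by identifying the best-case and worst-case behavior of the sampling loop, using the fact that the running time is dominated by the collision checks: each random-sample costs one CCD query ($T_{CCD}$), while each propagate-sample costs one DCD query ($T_{DCD}$). Throughout I would take the termination criterion to be that every one of the $n$ vertices of $B$ has been visited as a contact point, which is exactly the coverage measure used in the convergence analysis, and I would use the hypothesis that $B$ is a connected mesh so that its vertices are reachable from one another by sliding steps.

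For the lower bound, I would argue that the cheapest possible execution is the one in which a single random-sample seeds a propagation that sweeps the entire mesh. First I would note that generating that one random-sample costs exactly $T_{CCD}$. Since the propagate-samples land on the vertices of $B$ and the breadth-first propagation never re-expands an already-visited vertex, the best case is the one in which the propagation from this single seed reaches all remaining $n-1$ vertices, each certified collision-free by one DCD query. Connectedness of $B$ guarantees such a sweep is in principle possible. Summing gives $T_{CCD} + (n-1)T_{DCD}$, and no execution can do better, since at least one CCD query is unavoidable to enter the contact space and each of the other $n-1$ vertices must be confirmed by at least one collision check.

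For the upper bound, I would consider the adversarial case in which propagation is maximally ineffective: every sliding step immediately enters $\Cobs$ (the internal configuration case triggers at once), so no propagate-sample survives and every newly covered vertex must come from a fresh random-sample. Modeling the vertex reached by each random-sample as uniform over the $n$ vertices of $B$, the number of random-samples required to cover all of them is precisely the coupon-collector quantity $n H_n = n \sum_{i=1}^{n} \tfrac{1}{i}$, whose expectation is $\Theta(n \ln n)$. Since $n\ln n < n\lg n$ for large $n$, multiplying by the per-sample cost $T_{CCD}$ yields the stated $O(n \lg(n)\, T_{CCD})$ bound.

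The hard part will be making the upper bound rigorous rather than heuristic. The coupon-collector count is an expectation, not a deterministic worst case, so I would phrase the upper bound as an expected-time guarantee and, if a high-probability statement is wanted, invoke the standard concentration for coupon collection (the probability of exceeding $c\,n\ln n$ draws decays polynomially in $n$). I would also need to justify the uniform-hit model for the CCD-generated random-samples, or at least argue that any sampling distribution with full support on the vertices of $B$ produces the same $\Theta(n\log n)$ order. The lower bound, by contrast, is elementary once the per-query costs are fixed.
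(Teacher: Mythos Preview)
Your proposal is correct and follows essentially the same approach as the paper: the lower bound is the ideal single-seed propagation covering all $n$ vertices ($T_{CCD}+(n-1)T_{DCD}$), and the upper bound is the coupon-collector expectation $\sum_{i=1}^{n-1}\frac{n}{n-i}T_{CCD}\approx n\lg(n)\,T_{CCD}$ for pure random sampling when propagation contributes nothing. If anything, your discussion is more careful than the paper's own argument, since you explicitly flag that the upper bound is an expected-time (not worst-case) guarantee and that the uniform-hit assumption on random-samples needs justification; the paper simply asserts the harmonic-sum calculation without these caveats.
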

\begin{proof}
To obtain a high quality sample-based representation for the contact space, one way is to generate all the $n$ configurations where $A$ contacts with $B$ at one of $B$'s vertices. Ideally, we only need to generate one random-sample, and all the other configurations can be visited in one iteration of propagation process. The time complexity for such ideal case is $T_{CCD} + (n-1)T_{DCD}$, which is a lower bound of the propagation sampling's time complexity.

The upper bound of the propagation sampling can be achieved by considering the expected time cost of a pure random sampling required to visit all the $n$ vertices of object $B$. Suppose the sampling process has already visited $i$ vertices of $B$, the expected time for the sampling process to find a new vertex different from the visited $i$ vertices is $\frac{n}{n-i} T_{CCD}$. As a result, the expected time cost for the entire sampling procedure is $\sum_{i=1}^{n-1} \frac{n}{n-i} T_{CCD} = n \lg (n) T_{CCD}$, which can serve as a upper bound of our propagation sampling's time complexity.
\end{proof}

The two bounds estimated in this theorem are conservative, but intuitively describes the performance of our precomputation scheme. In practice, the running time of our algorithm is much lower than the upper bound estimated, because for each propagation iteration started from a random-sample, we can generate a large number of propagate-samples due to our special treatment to the internal configuration cases in Section~\ref{sec:method}.

\subsection{Sample Distribution}
We next show that our method can generate samples evenly distributed over the contact space.
First, the propagation movement has the following property:
\begin{theorem}
\label{them:commutativity}
\textbf{Commutativity:} Suppose the object $A$ slides along the surface of the object $B$ according to the transition function $(\q^1, p_A, p_B^1, \theta) = \mathcal{T}(\q^0, p_A, p_B^0, \theta)$, where the transition function is as described in Equation~\ref{eq:transit}. $\q^0$ and $\q^1$ are object $A$'s configurations before and after the transition, and $p_B^0$ and $p_B^1$ are the contact points on the object $B$ for these two configurations. Then the inverse slide can be formulated by the transition function $(\q^0, p_A, p_B^0, \theta) = \mathcal{T}(\q^1, p_A, p_B^1, \theta)$.
\end{theorem}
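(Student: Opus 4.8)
The plan is to prove commutativity by showing that the output of the transition function is determined entirely by its \emph{target} arguments $(p_A, p_B', \theta)$, independent of the configuration from which the slide begins, and then to observe that $\q^0$ already satisfies precisely the target constraints that $\mathcal{T}(\q^1, p_A, p_B^1, \theta)$ seeks to realize. The whole argument reduces to a uniqueness statement plus the reversibility of the sliding trajectory.

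First I would make explicit the geometric constraints that characterize the configuration $\q'$ produced by a single boundary-case slide $\mathcal{T}(\cdot, p_A, p_B', \theta)$. By construction, $\q'$ is a configuration of $A$ such that (i) the material point $p_A$ on $A$ coincides with the point $p_B'$ on $B$, and (ii) the angle between the normal of $A$ at $p_A$ and the normal of $B$ at $p_B'$ equals $\theta$. Supplemented by the requirement that $\q'$ be reached by sliding the contact point continuously along the mesh edge joining $p_B^0$ to $p_B^1$ (so that the residual rotational freedom in $\SEcubic$ left by (i) and (ii) is fixed by continuity along the edge), these conditions pin down $\q'$ locally and uniquely. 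The key point is that conditions (i)--(ii) refer only to $p_A$, $p_B'$, and $\theta$, and not to the starting configuration.

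Next I would record that the hypothesis $(\q^1, p_A, p_B^1, \theta) = \mathcal{T}(\q^0, p_A, p_B^0, \theta)$ already tells us that $\q^0$ is in contact with $B$ through the pair $(p_A, p_B^0)$ at contact-normal angle $\theta$; that is, $\q^0$ satisfies exactly conditions (i)--(ii) for the target parameters $(p_A, p_B^0, \theta)$. Since $p_B^0$ and $p_B^1$ are adjacent vertices of $B$'s mesh, the edge traversed during the forward slide is the same edge traversed during the reverse slide. I would then argue that the trajectory is reversible: running backward the continuous path from $\q^0$ to $\q^1$ along this edge is itself an admissible slide, one that starts at $\q^1$ with contact pair $(p_A, p_B^1)$, keeps $p_A$ fixed on $A$, preserves the angle $\theta$, and terminates at $\q^0$ with contact pair $(p_A, p_B^0)$. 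By the uniqueness established in the first step, this reversed trajectory is exactly $\mathcal{T}(\q^1, p_A, p_B^1, \theta)$, and its endpoint is $\q^0$, which is the claimed identity.

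The main obstacle I anticipate is the well-definedness and uniqueness underlying the first step: I must argue that the contact constraints (i)--(ii), together with continuity along the connecting edge, single out a unique configuration and a unique trajectory rather than a family of them. Concretely, the point-coincidence condition and the single-angle condition leave residual rotational degrees of freedom, and I must verify that the sliding rule (the contact point on $B$ tracing the edge while $p_A$ stays put and the normals maintain angle $\theta$) removes exactly this freedom. Once this local rigidity is in place, reversibility of the trajectory and hence commutativity follow immediately, since the defining constraints are manifestly symmetric in the pairs $(\q^0, p_B^0)$ and $(\q^1, p_B^1)$.
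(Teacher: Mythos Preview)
Your proposal is correct and follows essentially the same idea as the paper: the transition rule sets up a one-to-one correspondence between start and target states, so it is invertible. The paper's proof is in fact a two-sentence assertion of exactly this bijectivity, whereas you have unpacked \emph{why} the correspondence is one-to-one (the output is pinned down by the target triple $(p_A, p_B', \theta)$ together with continuity along the shared edge) and correctly flagged the residual rotational freedom in $\SEcubic$ as the one point requiring care---a subtlety the paper does not address.
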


\begin{proof}
This property is due to the fact according to our transition rule, the state and goal states of the transition function have one-to-one correspondence. Intuitively speaking, this means that if the sample $\q^0$ is visited first, then it can extend to $\q^1$ using the transition $\mathcal{T}$; if the sample $\q^1$ is visited first, it can extend to $\q^0$ using the same transition.   
\end{proof}

Next, we show that the samples generated by our propagation scheme will have no duplications:
\begin{theorem}
\label{them:Unique}
\textbf{Uniqueness:} Any two sets of samples generated by different propagation iterations will have no overlaps.
\end{theorem}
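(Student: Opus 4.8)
The plan is to recast the propagation as an exploration of an undirected graph and then use the Commutativity property (Theorem~\ref{them:commutativity}) to show that a single iteration exhausts one connected component, so that distinct iterations can only yield disjoint components. First I would define a graph $G$ whose vertices are the contact configurations the algorithm can reach, and whose edges join $\q$ to $\q'$ whenever $\q'$ is obtained from $\q$ by a valid collision-free sliding step $\q' = \mathcal{T}(\q, p_A, p_B, \theta)$ (including the re-seeding through a critical configuration $\q^c$ used in the internal configuration case). The crucial observation is that, by Commutativity, every such edge is symmetric: if $\q'$ is reachable from $\q$ in one step, then $\q$ is reachable from $\q'$ in one step via the same transition. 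Hence the relation ``reachable by a finite sequence of propagation steps'' is reflexive (zero steps), symmetric (Commutativity), and transitive (concatenation of step-sequences), i.e.\ an equivalence relation, and its classes are exactly the connected components of $G$.

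Next I would argue that one iteration of the local search in Algorithm~\ref{algo:localsearch}, which is a breadth-first traversal starting from the random-sample $\q^r$, visits precisely the connected component of $\q^r$ in $G$. The breadth-first loop terminates only when the propagate-queue is empty, that is, when every neighbour of every visited sample has already been enqueued or pruned as a duplicate by the kd-tree test; therefore no vertex of the component of $\q^r$ can remain unreached, and the sample set $S$ produced by the iteration equals one full equivalence class.

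The disjointness then reduces to the elementary fact that two equivalence classes are either identical or disjoint. To exclude the ``identical'' case for two \emph{different} iterations, I would invoke the duplicate-rejection performed by the kd-tree test (line~15 of Algorithm~\ref{algo:localsearch}) together with the analogous screening applied when a new random-sample is drawn: a fresh seed that already lies in a previously explored class is within the rejection radius $r$ of an existing sample and is discarded, so it never initiates a new iteration. Consequently any two iterations that actually run begin in distinct components, and their sample sets, being distinct equivalence classes, cannot overlap.

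The main obstacle I anticipate is justifying that the edge relation stays symmetric \emph{across} the internal configuration case, where the propagation switches the invariant contact point from $p_A$ to a new point $p_A^c$ located at the critical configuration $\q^c$. Commutativity as stated in Theorem~\ref{them:commutativity} directly covers a single boundary sliding step with a fixed $p_A$; I would need to verify that the re-seeding at $\q^c$ also admits an inverse step, so that the combined reachability relation---boundary slides together with contact-point switches---is genuinely symmetric and the connected-component argument holds globally rather than only within a single fixed-$p_A$ stratum. A secondary point to confirm is that the kd-tree pruning inside a single iteration never severs its own component, which holds because any $\q^c$ pruned as a near-duplicate is already covered by an earlier-visited sample of the same iteration.
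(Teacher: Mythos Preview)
Your proposal is correct and rests on the same two ingredients the paper uses: the Commutativity theorem to make propagation reversible, and the kd-tree screening of new random seeds to rule out re-entering an already explored region. The paper's own proof is an informal contradiction argument on a concrete example: it assumes a shared sample, invokes Commutativity to conclude the second seed would then be reachable from (hence covered by) the first iteration, and derives a contradiction with the kd-tree seed check. Your graph-theoretic packaging---reachability as an equivalence relation, one BFS iteration as one connected component, disjointness of distinct classes---is a cleaner and more explicit version of that same reasoning; in particular you state outright the claim the paper only uses tacitly, namely that a single iteration exhausts its component. The obstacle you flag about symmetry across the internal-configuration contact-point switch is real and is glossed over in the paper as well, so your proof is at least as complete as the original.
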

\begin{proof}
We use Figure~\ref{fig:analysis3} as an example for the two sets of sample configurations, where points belonging to different sets are marked as orange and blue respectively. Each sample set includes one seed random-sample (point $0$ and $7$ for this example), and a set of other samples extended from the seeds. 

If the two sample sets indeed overlap with each other, and we assume the duplicated points are $6$ and $10$, then according to Theorem~\ref{them:commutativity}, the point $6$ (also $10$) is able to propagate to $9$, then $7$ and $8$, and finally $11$. This means that all the points in the second sample set will overlap with some points in the first set. However, while generating the random-sample for the second set, we use a Kd-tree to guarantee that its distance to all points in the first set is large enough and thus such duplication for the random-sample is impossible. 
\end{proof}
This theorem explains why our approach will not generate repeated samples and can result in an efficient precomputation phase. 

\begin{figure}[!ht]
  \centering
  \includegraphics[width=0.8\linewidth]{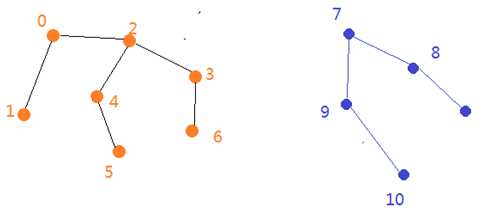}
  \caption{Any two sets of samples generated by different propagation iterations will have no overlaps.}
  \label{fig:analysis3}
\end{figure}

Overall, our propagation sampling method has three benefits: 
\begin{itemize}
\item \textbf{Efficiency} Theorems \ref{them:commutativity} and \ref{them:Unique} imply that no duplicated samples would be generated during the propagation. As a result, the algorithm will not waste time on generating redundant samples, and can make steady progress toward constructing a more precise representation of the contact space while generating more and more samples. These properties result in the error reduction shown in Figure~\ref{fig:converge}. In addition, these two theorems also guarantee the even distribution of samples over the surface of the contact space. 
\item \textbf{Accuracy} Our propagation steps directly slide the moving object over the surface of the fixed objects. As a result, every generated sample locates exactly on the contact space, and this results in the high accuracy of the PD results. 
\end{itemize}

\section{Limitations, Conclusions and Future Work}
We present a new PD approximation algorithm between general 3D models. We compute an approximation of contact space using propagation sampling. The propagation sampling scheme improves the accuracy of the approximation and is much faster as compared to only using random samples. We highlight the performance on many complex 3D models and highlight the benefits in terms of runtime performance and accuracy.

Our approach has some limitations. If the contact space has very narrow components, our sampling approach may miss them and thereby affect the accuracy of PD computations. It is possible that propagation sampling may not find samples during the local search computation. We would like to investigate the use of narrow-passage algorithms in sample-based motion planning to improve the performance. It would be useful to improve the performance of propagation-sampling by utilizing the local curvature of the surface to choose appropriate directions. We would like to evaluate the performance on complex and articulated models, and also integrate with dynamics simulation and motion planning algorithms.

{\small
\bibliographystyle{IEEEtran}
\bibliography{references}
}

\end{document}